\documentclass{article}

\usepackage{amsmath,amssymb,pifont}
\usepackage{multicol}
\usepackage{amstext}
\usepackage{amsthm}
\usepackage{multirow}
\usepackage{booktabs}
\usepackage{times}
\usepackage{lipsum}
\usepackage[shortlabels]{enumitem}
\usepackage{cancel}
\usepackage{wrapfig}
\usepackage{array}
\usepackage{siunitx}
\usepackage{csvsimple}
\usepackage[multidot]{grffile}
\usepackage{bbm}
\usepackage{dblfloatfix}
\usepackage{hyperref}
\usepackage{makecell}
\usepackage{bbm, dsfont}
\usepackage{mathtools}
\usepackage[dvipsnames]{xcolor}
\usepackage{comment}
\usepackage[capitalise]{cleveref}

\usepackage{geometry}
\usepackage{mathpazo}
\usepackage{enumitem}

\usepackage[multiple]{footmisc}
\usepackage{mathrsfs}
\usepackage{todonotes}
\usepackage{tikz}
\usepackage{hyperref}
\usepackage{cleveref}
\usepackage{algpseudocode,algorithm,algorithmicx}
\usepackage{xfrac}

\usepackage{graphicx} 
\usepackage{color}

\usepackage{array}
\usepackage{amssymb}
\usepackage{amsmath}
\usepackage{xspace}
\usepackage{fancyhdr}
\usepackage{comment}

\hypersetup{final}

\newcommand{\eps}{\ensuremath{\varepsilon}}


\newcommand{\calA}{\ensuremath{\mathcal{A}}}

\newcommand{\calC}{\ensuremath{\mathcal{C}}}
\newcommand{\calD}{\ensuremath{\mathcal{D}}}

\newcommand{\calL}{\ensuremath{\mathcal{L}}}

\newcommand{\calT}{\ensuremath{\mathcal{T}}}


\renewcommand{\Pr}{\mathop{\mathbf{Pr}}}

\newtheorem{lem}{Lemma}[section]

\newtheorem{thm}[lem]{Theorem}

\newtheorem{definition}[lem]{Definition}


\DeclareMathOperator*{\argmin}{arg\,min}

\makeatletter
\newcommand{\vast}{\bBigg@{4}}
\newcommand{\Vast}{\bBigg@{5}}
\makeatother

\newcommand{\ex}[2]{{\ifx&#1& \mathbb{E} \else
\underset{#1}{\mathbb{E}} \fi \left[#2\right]}}
\newcommand{\pr}[2]{{\ifx&#1& \mathbb{P} \else
\underset{#1}{\mathbb{P}} \fi \left[#2\right]}}

\newcommand{\ltwo}[1]{\left\|#1\right\|_2}

\usepackage{ulem}

\DeclarePairedDelimiterX{\infdivx}[2]{(}{)}{%
  #1\;\delimsize\|\;#2%
}


\newcommand{\mypar}[1]{\smallskip
	\noindent{\textbf{{#1}:}}}
	
\renewcommand{\epsilon}{\varepsilon}

\renewcommand{\tilde}{\widetilde}

\newtheorem{theorem}[lem]{Theorem}

\newcommand{\Expect}[2]{\mathbb{E}_{#1}\left[#2\right]}

\usepackage{microtype}
\usepackage{graphicx}
\usepackage{subfigure}
\usepackage{booktabs} 

\usepackage{fullpage}
\usepackage[numbers, sort, comma, square]{natbib}

\author{Arun Ganesh\thanks{Google \texttt{\{arunganesh,srxzr,sewoongo,steinke,omthkkr,athakurta,lunwang\}@google.com}},\hspace{0.3cm} Mahdi Haghifam\thanks{University of Toronto; Part of this work was done while the author was an intern at Google Brain. \texttt{mahdi.haghifam@mail.utoronto.ca}}, \hspace{0.3cm} Milad Nasr\footnotemark[1], \hspace{0.3cm} Sewoong Oh\footnotemark[1]\textsuperscript{ ,}\thanks{University of Washington}, \\ Thomas Steinke\footnotemark[1], \hspace{0.3cm}Om Thakkar\footnotemark[1], \hspace{0.3cm}Abhradeep  Thakurta\footnotemark[1], \hspace{0.3cm}  Lun Wang\footnotemark[1]}
\date{}

\begin{document}

\title{Why Is Public Pretraining Necessary for Private Model Training?}
\maketitle

\begin{abstract}
In the privacy-utility tradeoff of a model trained on benchmark language and vision tasks, remarkable improvements have been widely reported with the use of pretraining on publicly available data. This is in part due to the benefits of transfer learning, which is the standard motivation for pretraining in non-private settings. 
However, the stark contrast in the improvement achieved through pretraining under privacy compared to non-private settings suggests that there may be a deeper, distinct cause driving these gains. 
To explain this phenomenon, we hypothesize that the non-convex loss landscape of a model training necessitates an optimization algorithm to go through two phases. In the first, the algorithm needs to select a good ``basin'' in the loss landscape. In the second, the algorithm solves an easy optimization within that basin. The former is a harder problem to solve with private data, while the latter is harder to solve with public data due to a distribution shift or data scarcity. Guided by this intuition, we provide theoretical constructions that provably demonstrate the separation between private training with and without public pretraining. 
Further, systematic experiments on CIFAR10 and LibriSpeech  provide supporting evidence for our hypothesis.
\end{abstract}

\section{Introduction}\label{sec:intro}

As modern machine learning models are increasingly capable of memorizing the training data, membership inference attacks and data reconstruction attacks have successfully demonstrated the vulnerability of sharing models trained on sensitive data. 
Differential Privacy (DP), introduced in \cite{DMNS}, is now a gold standard  measure of privacy leakage in training a model, which is parameterized by two scalars:   $\varepsilon>0$ and $\delta\in[0,1]$. By introducing enough randomness in the training, one can ensure that the model does not depend too much on each individual training example. This provides plausible deniability to the participants \cite{KOV17} and evades privacy attacks, achieving  strong DP with small values of $(\varepsilon,\delta)$. We give a formal definition in Definition~\ref{def:DP}.

One of the main challenges in training on private data is that 
utility and privacy trades off unfavorably on standard benchmark tasks.  Given a target task, such as table-to-text generation, on a private dataset, say E2E dataset \citep{novikova2017e2e},  state-of-the-art techniques suffer from significant performance degradation to achieve even an acceptable level of privacy. For example, a weak privacy guarantee of $\varepsilon=8$ significantly deteriorates the performance of the trained model compared to the one trained without privacy, i.e.~$\varepsilon=\infty$ (second row of Table~\ref{tab:1}). 
Perhaps surprisingly, there is one simple change to the training algorithm that can significantly reduce this cost of privacy:  pretraining the model on some public data (first row of Table~\ref{tab:1}).

\begin{table}[h]
  \begin{center}
        \begin{tabular}{|l| r r| r|} 
        \hline
   & & & cost of\;  \\ 
   & $\eps=\infty$ & $\eps=8$ & privacy\\
 \hline 
 with public pretrain & 69.46 & 63.19 & 6.27 \\ 
 without public pretrain & 65.73 & 24.25& 41.48 \\
 \hline 
  gain of public pretraining & 3.73 & 38.94 & \\
 \hline
    \end{tabular}
    \caption{BLEU score for generating  descriptions of table entries on E2E dataset reported in \citep[Table 2]{li2021large} with $\delta=10^{-5}$. The first row uses  GPT-2 \cite{radford2019language} as a pretrained model. }
    \label{tab:1}
    \end{center}
\end{table}

Such remarkable gain of public pretraining has been widely observed in standard benchmark vision and language tasks, which we survey in Appendix~\ref{app:survey}. This includes CIFAR-10, MNIST, and Fashion MNIST in \cite{tramer2020differentially},  CIFAR-100, ImageNet, and Places-365 in \cite{de2022unlocking}, text generation with E2E and DART in \citep{li2021large}, and next word prediction on Reddit dataset  \cite{kerrigan2020differentially}. 
Note that in all these cases, the  public data distribution differs from the target task distribution. Nevertheless, we expect some gain from public pretraining, drawing analogy from its  success in non-private training of large models (e.g., first column in Table~\ref{tab:1}). However, the stark difference in the gain of pretraining between the non-private case, i.e., $\varepsilon=\infty$, and the weakly private case, say  $\varepsilon=8$, is striking. This suggests that the benefit of public pretraining in differentially private machine learning is a fundamentally different phenomenon from the typical benefits of standard transfer learning \citep{bozinovski1976influence,sharif2014cnn,bommasani2021opportunities}. Our goal is to give an insight into when such a phenomenon can be observed  by carefully constructing synthetic public and private tasks. Recently, in a closely related work,~\cite{li2022does} formally demonstrated that public data mitigates the curse of dimensionality when fine-tuning with privacy. However, to the best of our knowledge, ours is the first work to understand the {\it necessity} of public data in private model training.

\begin{figure}[h!]
    \subfigure[]{\includegraphics[width=0.4\textwidth]{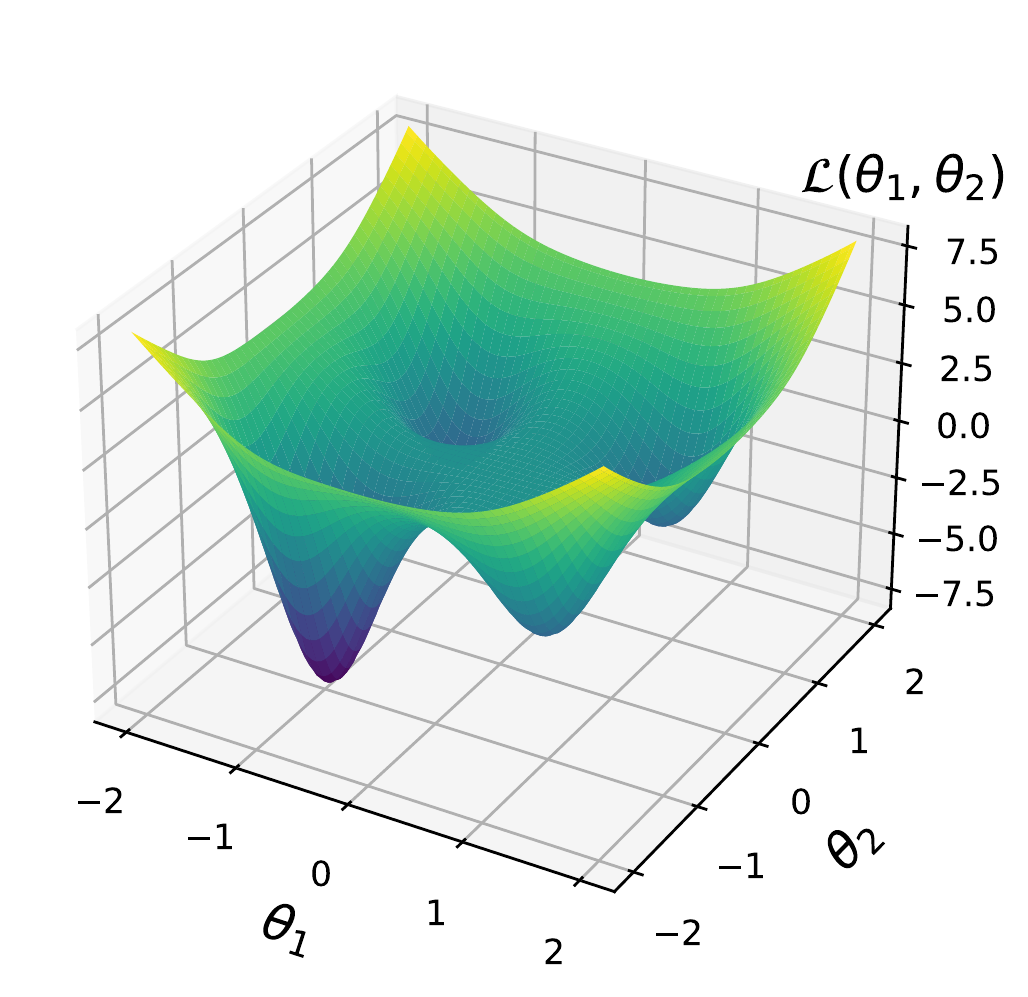}}
    \subfigure[]{\includegraphics[width=0.55\textwidth]{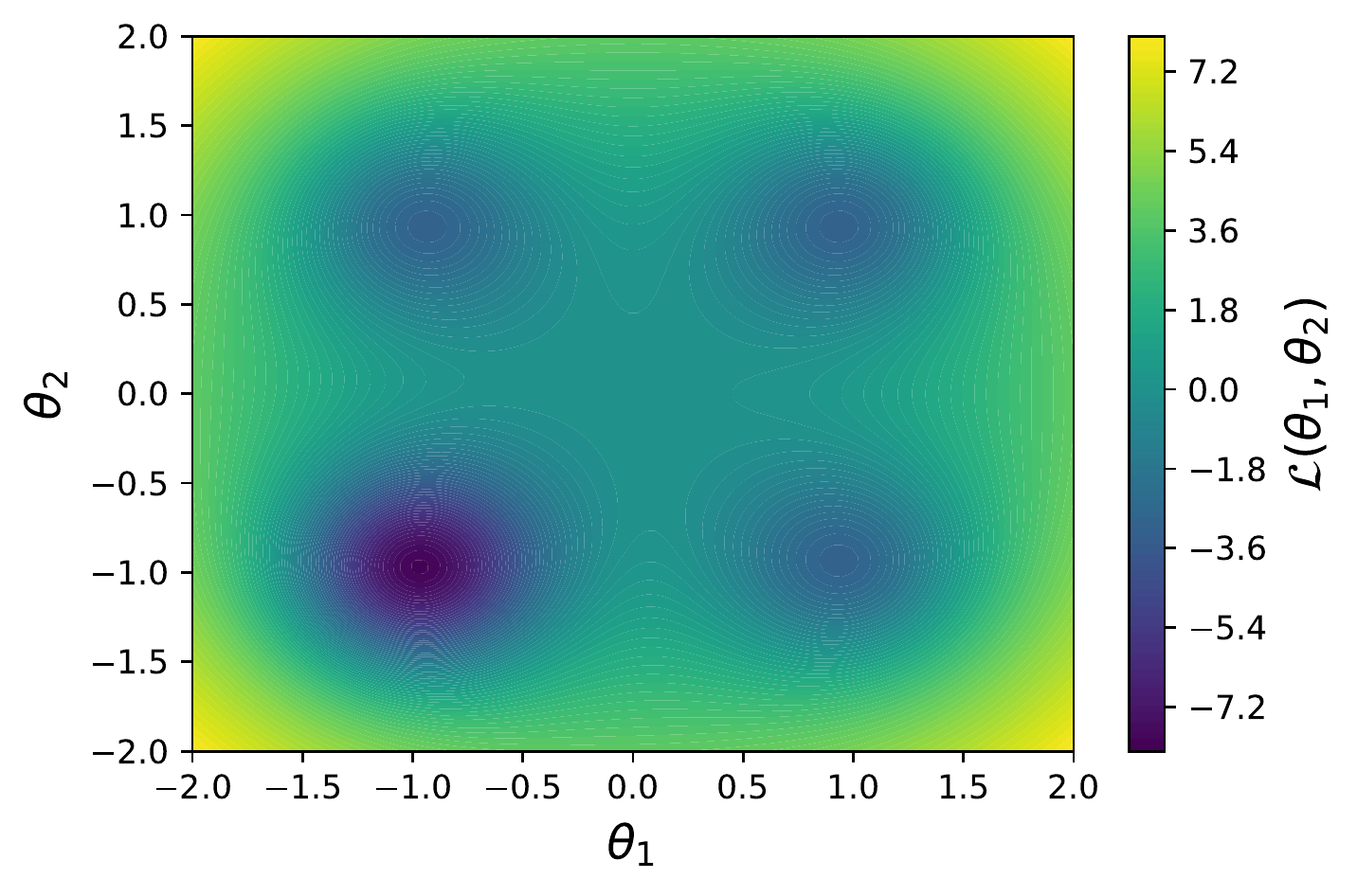}}
    \caption{An example of a non-convex loss function. 
    While the overall function is non-convex, it consists of  many locally convex ``basins'', some better than the others.
    }
    \label{fig:cubeconstruction}
\end{figure}

In this paper, we provide a theoretical example of a loss function that~\textit{requires} pretraining on public data and fine-tuning with private data. Our construction is guided by our hypothesis that the typical population loss landscape of standard machine learning tasks necessitates gradient based algorithms to go through two stages.  A conceptual two-dimensional sketch of the landscape we envision is shown in \cref{fig:cubeconstruction}.  We start from a random initialization close to the origin.  In the first stage, the algorithm is directed by the data towards a good basin with small local minima.  This is followed by the second stage, where the algorithm solves what is effectively a convex optimization in the selected basin to arrive at the local minima.  The key insight is that the first stage of selection should require significantly more samples to solve privately, compared to the number of samples required to solve it without privacy. Concretely, for the example in \cref{fig:cubeconstruction}, the gradient at the origin directs to the correct basin containing the global minima, but the gradient is small.  A private gradient descent adds additional noise to the update, increasing the chance of ending up at worse basins.  Hence, a significantly larger  private dataset is needed to overcome the privacy noise. 
This construction is motivated by the private hypothesis selection problems where a similar  fundamental separation in sample complexity is known  \cite{steinke2017tight}. This intuition would explain the widely observed failure of private training when starting from a random initialization. We turn this hypothesis into concrete constructions in Section~\ref{sec:construct}, where we formally prove the separation in sample complexity.  

\mypar{Main contributions} 
In Section~\ref{sec:construct}, we construct theoretical tasks to demonstrate the fundamental separation in sample complexity. First, we construct a theoretical loss function and a corresponding data distribution such that given $n_{pub}$ public samples and $n_{priv}$ private samples from this distribution with $n_{pub} \ll n_{priv}$, pretraining on the public data and fine tuning with the private data achieves a much better loss than any algorithm with access to either alone. Next, we extend our result to a more relevant setting where $n_{pub}$ is large but the public data is out of distribution.
This construction exhibits the need to have little to no privacy noise in the first ``phase'' of non-convex optimization. To the best of our knowledge, this is the first theoretical analysis demonstrating the need for public pretraining.

In Section~\ref{sec:exp}, we empirically validate our two-phase hypothesis. First, treating CIFAR-10 as our target private task, we consider a setup where we are allowed $T$ epochs of pre- or post-training on in-distribution public data, out-of-distribution public data, or private data with low noise. In all settings we demonstrate it is best to use all these low- or non-private training epochs on pretraining (as opposed to post-training). This demonstrates that early rounds of training are more sensitive to privacy noise, as conjectured in our two-phase hypothesis. Secondly, we look at a manifold of the loss landscape interpolated between three models trained on LibriSpeech. We show that a publicly pretrained and privately fine-tuned model ends up in the same basin as a fully publicly trained model. On the other hand, a fully privately trained model ends up in a different basin. This provides evidence that public pretraining's benefits are in part due to selecting a better basin for fine-tuning.

\subsection{Other Related Work}

Pretraining on  public data is now a default choice in large scale private training for NLP tasks \cite{yu2021differentially, he2022exploring, bu2022differentially, ginart2022submix}, including 175 billion parameter GPT-3 with $\varepsilon=1$, and vision tasks \cite{golatkar2022mixed, luo2021scalable, kurakin2022toward, bu2022differentially2, de2022unlocking}. Motivated by pretraining providing good feature representations, \cite{tramer2020differentially} propose using handcrafted features for small scale problems, as opposed to learned features, to improve utility-privacy tradeoff.
On the other hand, \citep{tramer2022considerations} cautions against the indiscriminate use of large-scale public data in DP training, which we discuss in depth in Section~\ref{sec:discuss}.

Besides the aforementioned empirical results, public data has been used to show theoretical improvements for problems such as query release \cite{alon2019limits,BassilyCMNUW20,liu2021leveraging}, mean estimation \cite{avent2020hybrid,bie2022estimation}, and optimization \cite{zhou2020bypassing, kairouz2020fast, asi2021adapting, amid2022mirrordescent}. In the optimization case, besides pretraining, these papers use public data to learn the geometry of the private loss in various ways and use geometry-aware gradient descent methods, rather than vanilla DP-SGD.

\cite{steinke2017tight} showed that for the problem of selecting  the $k$ coins out of $d$ coins that land heads with the highest probability, any $(\epsilon, \delta)$-DP algorithm with constant error requires $n = \Omega(\sqrt{k} \log d)$ samples from each coin. This is in contrast with the non-private case, where $n = O(\log d)$ suffices for any $k$. Selection and non-convex optimization are tightly connected: \cite{GTU22} show a reduction from selection to non-convex optimization, by designing a loss with $d$ locally convex basins, each corresponding to a different coin in the selection problem. This gives a different perspective on why the first stage of non-convex optimization may be difficult privately but not with public data: it effectively involves solving a selection problem on the basins in the loss function.

\subsection{Background on differential privacy and DP-SCO} 

Differential privacy is a privacy guarantee for algorithms that can be viewed as random functions of datasets:

\begin{definition}[{Differential Privacy \cite{DMNS}}]\label{def:DP} 
Let $\calD$ be a data domain, and $\calC$ be a set of outputs. An algorithm $\calA: \calD^* \rightarrow \calC$ is $(\epsilon, \delta)$-differentially private if for any $D, D' \in \calD^*$ such that $D$ and $D'$ differ in at most one element and any set of outputs $S \subseteq \calC$: $   \Pr_{\theta \sim \calA(D)}\left[\theta \in S\right] \;\;\leq\;\; e^\epsilon \Pr_{\theta \sim \calA(D')}\left[\theta \in S\right] + \delta$.
\end{definition}

A well-studied problem in the differential privacy literature is differentially private stochastic (convex) optimization (DP-SCO) \cite{BST14, BassilyFTT19, feldman2019private, bassily2020stability,kulkarni2021private, asi2021adapting, gopi2022private}. In DP-SCO, there is a loss function $\ell: \calC \times \calD \rightarrow \mathbb{R}$, and an unknown distribution $\tau$ over $\calD$. Given $n$ i.i.d. samples from $\tau$, we wish to find $\theta \in \calC$ minimizing the population loss $\calL(\theta) := \Expect{d \sim \tau}{\ell(\theta; d)}$. For any $\tau$ we denote the population minimizer by $\theta^*(\tau) := \argmin_{\theta \in \calC} \calL(\theta)$. The performance of a DP-SCO algorithm is measured by its \textit{risk}, $\Expect{D \sim \tau^n, \theta \sim \calA(D)}{\calL(\theta)} - \calL(\theta^*(\tau))$. DP-SCO captures most machine learning tasks we are interested in. The most widely studied algorithm in the DP-SCO literature is DP-SGD \cite{song2013stochastic, BST14, DP-DL, BassilyFTT19, bassily2020stability}, which minimizes the empirical loss $\ell(\theta; D) = (1/|D|) \sum_{d \in D} \ell(\theta; d)$ over $\calC \subseteq \mathbb{R}^p$ as follows: DP-SGD starts with $\theta_0$, and for $t$ iterations computes $\theta_{t+1} = \theta_t - \eta_t \nabla \ell(\theta_t; D) + \xi_t,$
where $\xi_t\sim N(0, \sigma^2 \mathbb{I})$ and $\sigma^2$ is chosen to satisfy ($\varepsilon,\delta$)-DP. 

Perhaps the simplest problem captured by DP-SCO is private mean estimation with identity covariance. The following lemma gives a lower bound on private mean estimation. It follows from Theorem 5.5 of \cite{BST14} and standard translation of ERM lower bounds to SCO lower bounds (see Appendix C of \cite{BassilyFTT19}):

\begin{lem}\label{lem:privatesclb}
For $\ell(\theta; d) = (1/2) \ltwo{\theta - d}^2$, $\calC = \mathbb{R}^p$, and $\calD = B_{p}(0, 1)$ (the $p$ dimensional $\ell_2$-ball of radius 1 centered at the origin), let $\theta^*(\tau) := \argmin_{\theta \in \calC} \calL(\theta)$ for a distribution $\tau$ over $\calD$. For $p \leq \epsilon^2 n^2$ and $\delta = o(1/n)$, there exists a set of distributions, $\calT_1$, over $\calD$, such that the following is true. For every $(\epsilon, \delta)$-DP algorithm $\calA: \calD^n \rightarrow \calC$, there exists $\tau(\calA) \in \calT_1$ such that:
\begin{align*}
&\Expect{D \sim \tau(\calA)^n, \theta \sim \calA(D)}{\calL(\theta)} =\calL(\theta^*(\tau(\calA))) + \Omega\left(\frac{p}{\epsilon^2 n^2} + \frac{1}{n}\right).
\end{align*}
Furthermore, for some $M = \Omega(\frac{\sqrt{p}}{\epsilon n})$ and all such $\tau \in \calT_1$, $|\ltwo{\theta^*(\tau)} - M| \leq 1/n$.
\end{lem}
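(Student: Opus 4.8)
\medskip
\noindent\textbf{Proof proposal.}

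\emph{Reduction to private mean estimation.} The plan is to notice that for this quadratic loss the statement is exactly a minimax lower bound for DP mean estimation on the ball. With $\ell(\theta;d)=\tfrac12\ltwosq{\theta-d}$ and $\calC=\R^p$, expanding gives $\calL(\theta)=\tfrac12\ltwosq{\theta}-\ip{\theta}{\mu_\tau}+\tfrac12\Expect{d\sim\tau}{\ltwosq{d}}$ with $\mu_\tau:=\Expect{d\sim\tau}{d}$, so $\theta^*(\tau)=\mu_\tau$ and $\calL(\theta)-\calL(\theta^*(\tau))=\tfrac12\ltwosq{\theta-\mu_\tau}$. Hence it suffices to exhibit a family $\calT_1$ of distributions on $B_p(0,1)$, all with $\ltwo{\mu_\tau}$ (essentially) equal to a common value $M=\Omega(\sqrt p/(\epsilon n))$, on which every $(\epsilon,\delta)$-DP $\calA$ has $\Expect{D\sim\tau^n,\,\theta\sim\calA(D)}{\ltwosq{\theta-\mu_\tau}}=\Omega\!\big(p/(\epsilon^2n^2)+1/n\big)$ for some $\tau\in\calT_1$.

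\emph{A family on a common sphere, and the $p/(\epsilon^2n^2)$ term.} Fix $M:=c\,\max\{\sqrt p/(\epsilon n),\,1/\sqrt n\}$ for a small absolute constant $c$; by $p\le\epsilon^2n^2$ we have $M\le 1$, and plainly $M=\Omega(\sqrt p/(\epsilon n))$. For a unit vector $v$ let $\tau_v$ be the product distribution whose $j$-th coordinate is $\tfrac1{\sqrt p}(2B_j-1)$ with $B_j\sim\mathrm{Bernoulli}(\tfrac12+\tfrac{M\sqrt p}{2}v_j)$ independently; this is supported on $\{\pm1/\sqrt p\}^p\subseteq B_p(0,1)$ and has $\mu_{\tau_v}=Mv$, so $\ltwo{\theta^*(\tau_v)}=M$ \emph{exactly}, which will give the ``Furthermore'' clause with no slack. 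Including in $\calT_1$ the scaled sign vectors $\{s/\sqrt p:s\in\{\pm1\}^p\}$ makes $\tau_{s/\sqrt p}$ a rescaled one-way-marginals instance (coordinate biases $\tfrac12\pm\tfrac M2$); Theorem~5.5 of \cite{BST14} combined with the ERM$\to$SCO transfer of \cite[Appendix~C]{BassilyFTT19} (equivalently, a fingerprinting argument applied directly to these product distributions) then shows that for $\delta=o(1/n)$ any $(\epsilon,\delta)$-DP estimator has squared error $\Omega(\min\{M^2,\,p/(\epsilon n)^2\})=\Omega(p/(\epsilon^2n^2))$ on some such $\tau$ --- morally because the per-coordinate signal $M/\sqrt p=\Theta(1/(\epsilon n))$ already sits at the per-coordinate DP noise floor.

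\emph{The $1/n$ term, and combining.} The $1/n$ term holds for all algorithms and is a standard statistical lower bound. Enlarge $\calT_1$ with an Assouad family: for $b\in\{\pm1\}^{p/2}$ let $v_b=v^0\cos\theta+\tfrac{\sin\theta}{\sqrt p}\iota(b)$ with $v^0=\mathbf 1/\sqrt p$, $\iota(b)=(b_1,-b_1,\dots,b_{p/2},-b_{p/2})$, and $\sin\theta=\Theta(1/(M\sqrt n))\le1$ (using $M\ge c/\sqrt n$); each $v_b$ is a unit vector, $\ltwosq{\mu_{v_b}-\mu_{v_{b'}}}=\Theta(M^2\sin^2\theta/p)\cdot d_H(b,b')$, and flipping one $b_i$ perturbs only two coordinate-marginals by $O(M\sin\theta)$, so neighbouring $n$-fold product distributions have total variation $\le\tfrac12$ once $nM^2\sin^2\theta=O(1)$. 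Assouad's lemma gives minimax squared error $\Omega(M^2\sin^2\theta)=\Omega(1/n)$. Combining, every $(\epsilon,\delta)$-DP $\calA$ fails on some $\tau\in\calT_1$ by $\max$ of the two bounds $=\Theta(p/(\epsilon^2n^2)+1/n)$, and every $\tau\in\calT_1$ has $\ltwo{\theta^*(\tau)}=M=\Omega(\sqrt p/(\epsilon n))$.

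\emph{Main obstacle.} The delicate point is the private term: one must check that the fingerprinting / DP-ERM lower bound still applies when the coordinate biases are only $\Theta(1/(\epsilon n))$ away from $\tfrac12$ --- the ``small-signal'' regime forced by wanting all hard instances to lie on the small sphere of radius $M$ required by the ``Furthermore'' clause --- and, if routing through \cite{BST14} plus the ERM$\to$SCO reduction rather than a purely distributional bound, that the $\Theta(1/\sqrt n)$ sampling fluctuation intrinsic to that reduction is precisely what yields the additive $1/n$ term without eroding the $p/(\epsilon^2n^2)$ term. The mean-estimation reduction and the Assouad step are routine.
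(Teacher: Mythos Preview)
Your proposal is correct and follows the same route the paper indicates --- the paper does not give a proof beyond stating that the lemma ``follows from Theorem~5.5 of \cite{BST14} and standard translation of ERM lower bounds to SCO lower bounds (see Appendix~C of \cite{BassilyFTT19}),'' and your argument is a faithful (and more detailed) unpacking of exactly those citations. Your explicit placement of all hard means on the sphere of radius $M$ is in fact tidier for the ``Furthermore'' clause than what the bare ERM$\to$SCO reduction would immediately give.
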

Non-privately, this translates to:
\begin{lem}\label{lem:publicsclb}
For $\ell(\theta; d) = \frac{1}{2}\ltwo{\theta - d}^2$, $\calC = \mathbb{R}^p$, and $\calD = B_{p}(0, 1)$, there exists a set of distributions, $\calT_2$, over $\calD$ such that the following is true. For every $\calA: \calD^n \rightarrow \calC$ there exists $\tau(\calA) \in \calT_2$ such that: 
\[\Expect{D \sim \tau(\calA)^n, \theta \sim \calA(D)}{\calL(\theta)} =\calL(\theta^*(\tau(\calA))) +\Omega\left(\frac{1}{n}\right).\]
\end{lem}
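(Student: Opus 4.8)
The plan is to reduce the statement to a classical minimax lower bound for mean estimation and then invoke Le Cam's two-point method. First I would record, exactly as in the computation behind Lemma~\ref{lem:privatesclb}, that for the quadratic loss $\ell(\theta;d) = \tfrac12\ltwo{\theta-d}^2$ one has $\calL(\theta) = \tfrac12\ltwo{\theta-\mu_\tau}^2 + \tfrac12\,\Expect{d\sim\tau}{\ltwo{d-\mu_\tau}^2}$ with $\mu_\tau := \Expect{d\sim\tau}{d}$; since $\calC=\mathbb{R}^p$ is unconstrained, the population minimizer is $\theta^*(\tau)=\mu_\tau$, and the excess risk of any output $\theta$ is precisely $\tfrac12\ltwo{\theta-\mu_\tau}^2$. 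Hence it suffices to exhibit a two-element family $\calT_2$ of distributions on $B_p(0,1)$ on which every (possibly randomized) estimator has worst-case expected squared-$\ell_2$ error $\Omega(1/n)$.

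Next I would set up the two-point instance. Fix a small absolute constant $c>0$, put $\gamma = c/\sqrt n$, and for $v\in\{-1,+1\}$ let $\tau_v$ be the distribution supported on $\{e_1,-e_1\}\subseteq B_p(0,1)$ placing mass $\tfrac{1+v\gamma}{2}$ on $e_1$; then $\mu_{\tau_v} = v\gamma\, e_1$, so the two means are at $\ell_2$ distance $2\gamma$. Given an estimator $\theta=\calA(D)$, I would form the test $\mathrm{sign}(\ip{\theta}{e_1})$ and note that whenever it misclassifies $v$ under $\tau_v^n$ we must have $\ltwo{\theta-\mu_{\tau_v}}\ge\gamma$ (being on the wrong side of the bisecting hyperplane). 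Multiplying by $\gamma^2$, summing over $v$, and using that every test has total error probability at least $1-\mathrm{TV}(\tau_{+1}^n,\tau_{-1}^n)$, this yields
\[
\max_{v\in\{-1,1\}}\Expect{D\sim\tau_v^n,\,\theta\sim\calA(D)}{\ltwo{\theta-\mu_{\tau_v}}^2}\;\ge\;\tfrac{\gamma^2}{2}\bigl(1-\mathrm{TV}(\tau_{+1}^n,\tau_{-1}^n)\bigr).
\]

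Finally I would control the total variation distance: by tensorization and Pinsker, $\mathrm{TV}(\tau_{+1}^n,\tau_{-1}^n)^2\le \tfrac{n}{2}\,D_{\mathrm{KL}}(\tau_{+1}\,\|\,\tau_{-1})$, and since $\tau_{\pm1}$ are Bernoulli$(\tfrac{1\pm\gamma}{2})$ distributions with $\gamma\to 0$ we have $D_{\mathrm{KL}}(\tau_{+1}\,\|\,\tau_{-1}) = O(\gamma^2)$, hence $\mathrm{TV}^2 = O(n\gamma^2)=O(c^2)\le\tfrac14$ once $c$ is a sufficiently small absolute constant. Plugging back in gives worst-case expected squared error $\ge \gamma^2/4 = \Omega(1/n)$, and therefore worst-case excess risk $\Omega(1/n)$ over $\calT_2=\{\tau_{+1},\tau_{-1}\}$, as claimed. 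I do not expect a genuine obstacle here, as this is the textbook two-point argument; the only places needing care are the reduction in the first step (identifying excess population risk with mean-squared error, which uses $\calC=\mathbb{R}^p$) and the constant bookkeeping in the last step ensuring $1-\mathrm{TV}$ is bounded below by an absolute constant. If one later wants the minimizers $\theta^*(\tau)$ to have a prescribed norm $\rho\in(0,1]$ — so as to splice this family with other constructions, as the extra clause of Lemma~\ref{lem:privatesclb} does — it suffices to replace $e_1$ by $\rho\,e_1$, which leaves the analysis unchanged up to an overall factor $\rho^2$.
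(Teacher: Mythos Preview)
Your proposal is correct. The reduction of excess population risk to squared $\ell_2$ error in estimating the mean is exactly right for this unconstrained quadratic loss, and the two-point Le Cam construction with Bernoulli-shifted masses on $\{e_1,-e_1\}$, together with Pinsker and KL tensorization, yields the claimed $\Omega(1/n)$ lower bound without issue. The constant bookkeeping and the estimation-to-testing step are both handled carefully.

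By contrast, the paper does not give a self-contained proof of this lemma at all: it states the private version (Lemma~\ref{lem:privatesclb}) as a consequence of Theorem~5.5 of \cite{BST14} combined with the standard ERM-to-SCO translation from \cite{BassilyFTT19}, and then simply says ``Non-privately, this translates to'' Lemma~\ref{lem:publicsclb}. So the paper's ``proof'' is a citation, whereas you give an elementary, explicit argument. Your route has the advantage of being fully self-contained and of making the hard family $\calT_2$ completely concrete (just two distributions), which is convenient for the later splicing with $\calT_1$ in Section~\ref{sec:construct}; the paper's route has the advantage of directly inheriting whatever structural properties of $\calT_2$ the cited constructions already provide. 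Your closing remark about rescaling by $\rho$ to place $\theta^*(\tau)$ at a prescribed norm is also a nice touch that anticipates how the lemma is used downstream.
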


These lemmas are the basis of the results in Section~\ref{sec:construct}.
Results in \cite{BST14} and standard translations from empirical loss bounds to population loss bounds via uniform stability (see e.g. \cite{HardtRS16}) show that DP-SGD achieves upper bounds for mean estimation that match these lower bounds up to polylogarithmic factors.
\section{Necessity of public pretraining} 
\label{sec:construct} 

A typical scenario in pretraining on public data is when the public dataset is large but is Out-Of-Distribution (OOD); there is a potentially large distribution shift between the  public  and the  private dataset \cite{yu2021differentially, he2022exploring, bu2022differentially, ginart2022submix,golatkar2022mixed, luo2021scalable, kurakin2022toward, bu2022differentially2, de2022unlocking}.  
In this section, we start with a simpler scenario where a small number of In-Distribution (ID) samples are used in public pretraining.  This simplifies the explanation of our construction and also  corresponds to realistic scenarios where public data comes from users who consented. The more common OOD case is addressed in Section~\ref{sec:ood}. 
\subsection{Pretraining on in-distribution public data} 
\label{sec:id}

When a small number of in-distribution samples are publicly available, several techniques have been proposed to improve the accuracy-privacy trade-off. An immediate use is to reduce the sensitivity of a mini-batch gradient by including the public data in the mini-batch. The public data can also be used to compute useful statistics; one can reduce the privacy noise by projecting the gradient onto a low-dimensional subspace computed from public data  \cite{kairouz2020fast,yu2021not,zhou2020bypassing,golatkar2022mixed} and by improving the adaptive clipping method with the geometry of the gradients estimated from public data  \cite{golatkar2022mixed,asi2021private,dopesgd}. 
However, by far the most dominant technique in terms of the accuracy gain is pretraining on the in-distribution public data. For example, on CIFAR-10 dataset, one can train a $(\varepsilon=2,\delta=10^{-5})$-DP model that achieves  64.9\% test  accuracy. Treating 4\% of the training dataset as public data, the accuracy can be improved by 7.1\% \citep[Table 1]{dopesgd}. All the other techniques only give 2.8\% extra gain, which includes using public data in fine-tuning, public data assisted adaptive clipping, and averaging past iterates.  Such pretraining with in-distribution public data has been  successful also in training variational autoencoders \cite{jiang2022dp}.  We provide systematic study of these gains with numerical experiments on benchmark datasets in \cref{sec:exp}.  

Motivated by the practical successes, we first consider the following setup. We are given $n_{pub}$ public examples, $D_{pub}$, and $n_{priv}$ private examples, $D_{priv}$, both drawn i.i.d. from the same distribution $\tau$, where  $n_{pub}\ll n_{priv}$. We construct $\tau$ such that pretraining on small ID public data can significantly improve the performance of a private training. Concretely, we will show that for any integer $p$, there exists a loss function $\ell$, sample sizes $n_{pub}$ and $n_{priv}$, and a data distribution $\tau$ such that $(i)$ any non-private algorithm $\calA_{pub}$ given only $D_{pub}$ has worst-case excess population loss lower bounded by $\Omega(1)$; $(ii)$ any $(\epsilon, \delta)$-DP algorithm $\calA_{priv}$ given only $D_{priv}$ has worst-case excess population loss lower bounded by $\Omega(1)$; and $(iii)$ a gradient-based algorithm $\calA_{mixed}$ that pretrains on $D_{pub}$ and privately fine-tunes on $D_{priv}$ achieves excess population loss upper bounded by $O(1/p)$. In particular, the dimensionality of $\ell$, $n_{pub}$, and $n_{priv}$ are polynomial functions of $p$. We focus on the unconstrained case where $\calC = \mathbb{R}^p$, as it aligns with how differentially private learning models are trained in practice.

\subsection{Construction} 
\label{sec:id_construction} 

We  first give a high-level overview of a construction for our main theorem and defer details to Appendix~\ref{app:wells}. While our construction builds on upper/lower bounds for public/private mean estimation, one can build a similar construction using upper/lower bounds for linear regression instead. This follows via standard reductions from mean estimation to linear regression. We focus here on mean estimation for simplicity of presentation. 
A reference for notation  is in Appendix~\ref{app:notation}.

Our strategy is to concatenate the two known lower bounds for mean estimation with private data in Lemma~\ref{lem:privatesclb} and with public data in Lemma~\ref{lem:publicsclb}.  We consider a distribution $\tau$ over a data point $d = (d_1,d_2) \in\mathbb{R}^{p^4}\times \mathbb{R}^p$ whose  population mean is $ \theta^*(\tau)=(\theta^*_1(\tau), \theta^*_2(\tau) ) \in \mathbb{R}^{p^4}\times \mathbb{R}^p$. The first $p^4$ coordinates are used to construct a hard distribution for private mean estimation with a loss function 
$\ell_1:{\mathbb R}^{p^4}\times{\mathbb R}^{p^4}\to{\mathbb R} $, and the following $p$ coordinates are used to construct a hard distribution for public mean estimation with a loss function $\ell_2:{\mathbb R}^{p}\times{\mathbb R}^{p}\to{\mathbb R} $. 
We assume we have $n_{pub}$ public samples and $n_{priv}$ private samples from the same distribution with $n_{pub} \ll n_{priv}$.

We will define an appropriately chosen basin $S\subset {\mathbb R}^{p^4}$ and eventually combine our loss functions in Eq.~\eqref{def:S} such that if $\theta_1$ is   far from $S$, then $\ell((\theta_1, \theta_2)) =\ell_1(\theta_1)$, but inside of $S$, $\ell((\theta_1, \theta_2)) = \ell_1(\theta_1)+\ell_2(\theta_2)$. In particular, we will choose $\ell_2$ that is non-positive everywhere, so that is desirable to be in $S$ with respect to minimizing $\ell$.

Starting outside of $S$, the algorithm first needs to minimize $\ell_1$ to reach $S$. We use $\ell_1$ from the private lower bound (Lemma~\ref{lem:privatesclb}) such that a private algorithm fails just on optimizing $\ell_1$. On the other hand, an algorithm with a small amount of public data can easily optimize $\ell_1$. We will eventually choose $S$ that contains all points close to the optimum of $\ell_1$, so any public algorithm will reach $S$ after optimizing $\ell_1$, and will not touch $\theta_2$ in doing so. Once inside the basin $S$, the algorithm needs to also minimize $\ell_2$ to reach a small total loss. We use $\ell_2$ from the public lower bound (Lemma~\ref{lem:publicsclb}) such that a small-size public data alone is not sufficient to (approximately) reach global minima but large-size private data can. Precisely, we combine the two loss functions and define
\begin{equation}
\ell((\theta_1, \theta_2); (d_1, d_2)) = \ell_1(\theta_1; d_1) + p \, q(\theta_1) \cdot \ell_2(\theta_2; d_2), \label{def:loss} 
\end{equation}
where 
\begin{equation*}
q(\theta_1) := \left\{\begin{array}{ll}
0, &  \ltwo{\theta_1 - \Pi_S(\theta_1)} > R_2\\
1 - \frac{\ltwo{\theta_1 - \Pi_S(\theta_1)}}{R_2}, & 0 < \ltwo{\theta_1 - \Pi_S(\theta_1)} \leq R_2\\
1 & \ltwo{\theta_1 - \Pi_S(\theta_1)} = 0 \text{ }(\text{i.e., } \theta_1 \in S)
\end{array}
\right. \;,
\end{equation*} 
for some $S\subset {\mathbb R}^{p^4}$  and 
$R_2>0$ to be defined later. Here $\Pi_S$ denotes Euclidean projection into $S$.
If $\theta_1$ is far from $S$, $\ell$ is just $\ell_1(\theta_1)$. If $\theta_1$ is in $S$, then $\ell$ is just $\ell_1(\theta_1) + p \cdot \ell_2(\theta_2)$. In between these two regimes, $\ell$ interpolates between these two loss functions; this interpolation is technically not necessary for our eventual theorem and proof, but gives a more realistic loss function. Note that $\ell_2$ is non-positive, so having larger $q(\theta_1)$ (i.e., being in or close to $S$) is advantageous with respect to minimizing the term depending on $\theta_2$.


In Figure~\ref{fig:wells} is an example of our eventual construction. $S$ consists of two basins, centered at $-0.5$ and $0.5$. If $\theta_1$ is near one of these points, then $\ell$ is a quadratic centered at $.005$ with respect to $\theta_2$. If $\theta_2$ is far from these points, $\ell$ is a constant with respect to $\theta_2$. So, if we start at the origin, using gradient-based methods we would first have to optimize $\theta_1$ to get to one of the basins, and then optimize $\theta_2$. With private data choosing the right basin is hard, with public data optimizing $\theta_2$ within a basin is hard.

\begin{figure}[h!]
    \subfigure[]{\includegraphics[width=0.4\textwidth]{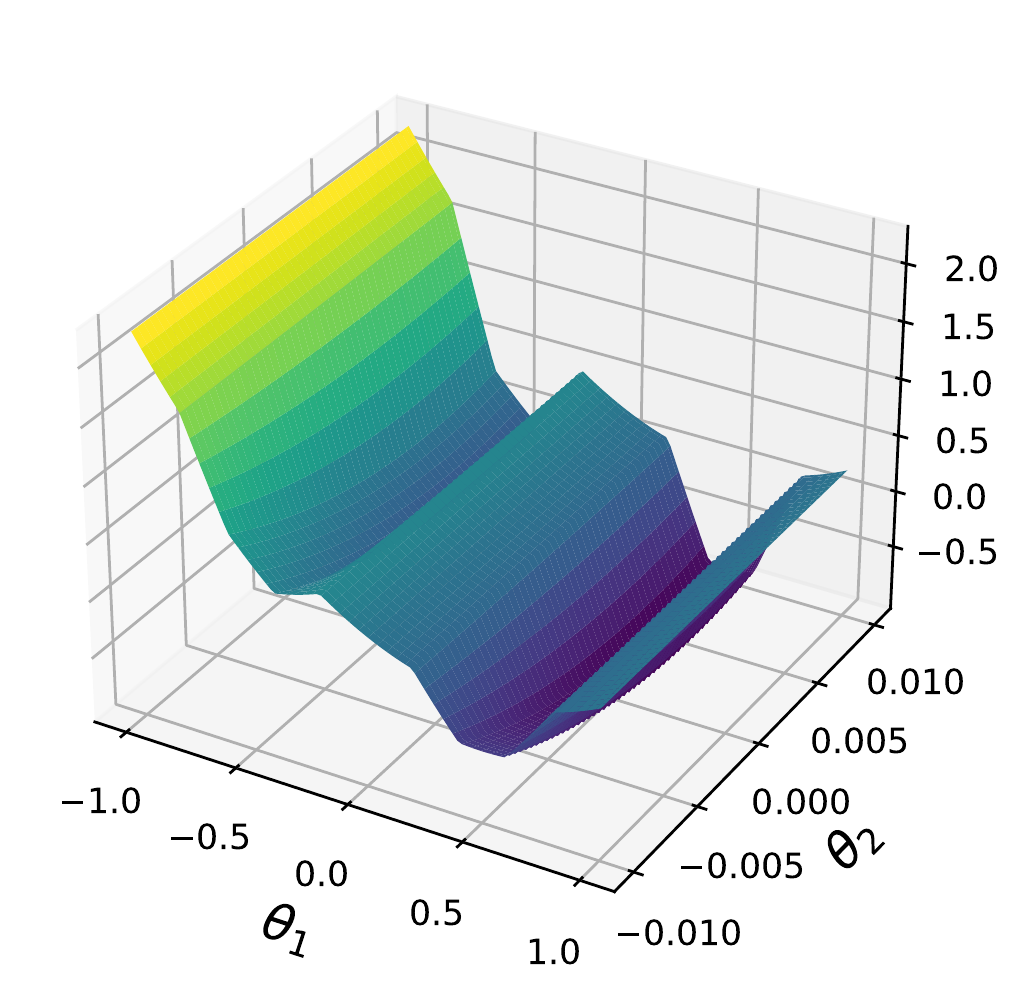}}
    \subfigure[]{\includegraphics[width=0.55\textwidth]{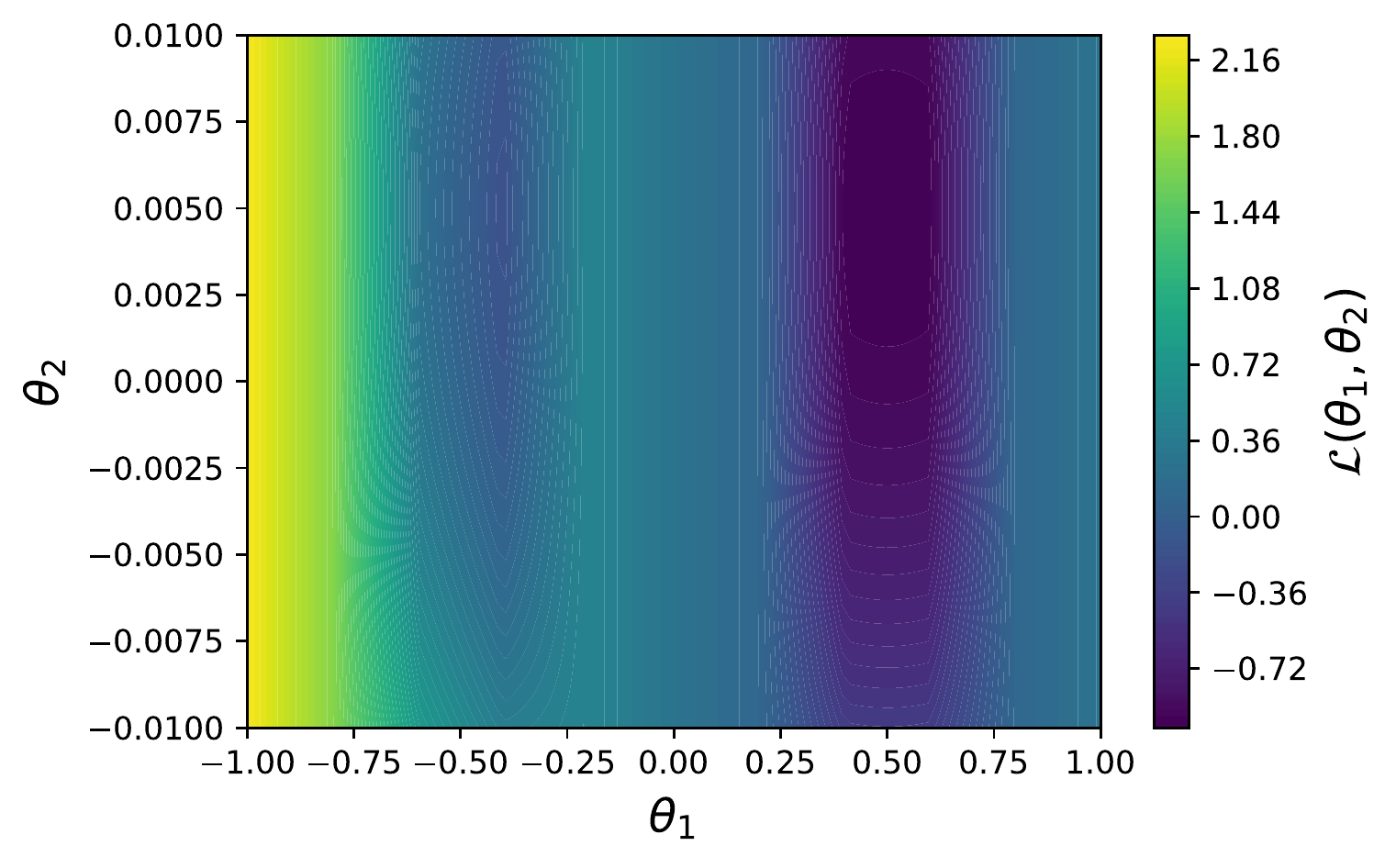}}
    \caption{(a) A 3-D visualization of the toy example of our construction for $\ell$, for one-dimensional $\theta_1$ and $\theta_2$. (b) A heatmap of the same example.}
    \label{fig:wells}
\end{figure}


\mypar{The loss functions} In the initial stage of the algorithm (outside of $S$), the lower bound for private algorithm  follows from the choice of $\ell_1(\theta_1; d_1) := \min\{(1/2) \ltwo{\theta_1 - d_1}^2, \frac{9}{2}\}$ defined over the first $p^4$ coordinates. Note that as long as $\ltwo{\theta_1} \leq 2$ and $d_1$ is in $\calD_1$, this is equivalent to  a loss function of $\ltwo{\theta_1 - d_1}^2$, i.e.~we can still apply Lemma~\ref{lem:privatesclb} to $\ell_1$. The minimum is used in our upper bound to keep $\ell_1$ bounded in the low-probability event that DP-SGD adds a large amount of noise to $\theta_1$.

For any $\calA_{priv}$, we define our basin to include the global minima of $\ell_1$ on the  distribution $\tau(\calA_{priv})$ in Lemma~\ref{lem:privatesclb}. Since we know $|\ltwo{\theta^*(\tau(\calA_{priv}))\|-M} =o_n(1)$, we let
\begin{eqnarray} 
S \;:=\; B_{p^4}(0, M + R_1) \setminus B_{p^4}(0, M - R_1)\;.\label{def:S}
\end{eqnarray}
where $M = \Omega(1)$ is defined as in \cref{lem:privatesclb} for the case when dimension is  $p^4$, $\epsilon = 1$, and $n_{priv} = p^2$, and we choose some $R_1 < M$. Note that $S$ is the set of all points where $\ell_2$-norm of $\theta_1$ is close to $M$; the basin is a single non-convex set. Our construction can seamlessly generalize to the case where there are numerous disconnected basins to resemble more realistic landscapes. If $R_1$ is sufficiently large, then Lemma~\ref{lem:privatesclb} guarantees that the population minimizer of $\ell_1$ is contained in $S$ and far from the boundary of $S$ for distributions in $\calT_1$ as defined in that lemma. Further, by a vector Azuma inequality \cite{hayes03vectorazuma} the same is true of the empirical minimizer of $\ell_1$ over the public data with high probability. We will specify a value of $R_1$ in Appendix~\ref{app:wells}.

In the next stage of the algorithm (inside $S$), the loss is dominated by $\ell_2(\theta_2, d_2) := \min\{0, \frac{\ltwo{\theta_2 - d_2}^2}{2r^2} -  \frac{9}{2}\}$ where we use $r$ to scale the domain of $\ell_2$. In particular, let $\calT_2'$ be the set of $p$-dimensional data distributions over $\calD_2' := B_p(0, r)$, and $\calT_2'$ is defined by shrinking the support of each distribution in $\calT_2$ (as defined in Lemma~\ref{lem:publicsclb}) by a factor of $r<1$. 
We will specify the value of $r$ in Appendix~\ref{app:wells}; for now, one can think of $r \ll 1$.  Since rescaling does not fundamentally change the problem, again \cref{lem:publicsclb} (up to a $1/r^2$ rescaling) holds also in $\calT_2'$.

Note that as long as $\ltwo{\theta_2} \leq 2r$ and $d_2 \in \calD_2'$, minimizing $\ell_2$ is equivalent to minimizing $\frac{\ltwo{\theta_2 - d_2}^2}{2r^2}$, which is just a rescaling of minimizing $\frac{\ltwo{\theta_2 - d_2}^2}{2}$. In other words, we can still apply Lemma~\ref{lem:publicsclb} to $\ell_2$. Putting $\ell_1$ and $\ell_2$ together, our loss is defined in Eq.~\eqref{def:loss} with a choice of $R_2 < M - R_1$, which  implies $q(0) = 0$; the exact value of $R_2$ is immaterial to our construction and eventual theorem statement.


\begin{figure}[h!]
\begin{center}
\includegraphics[width=.5\textwidth]{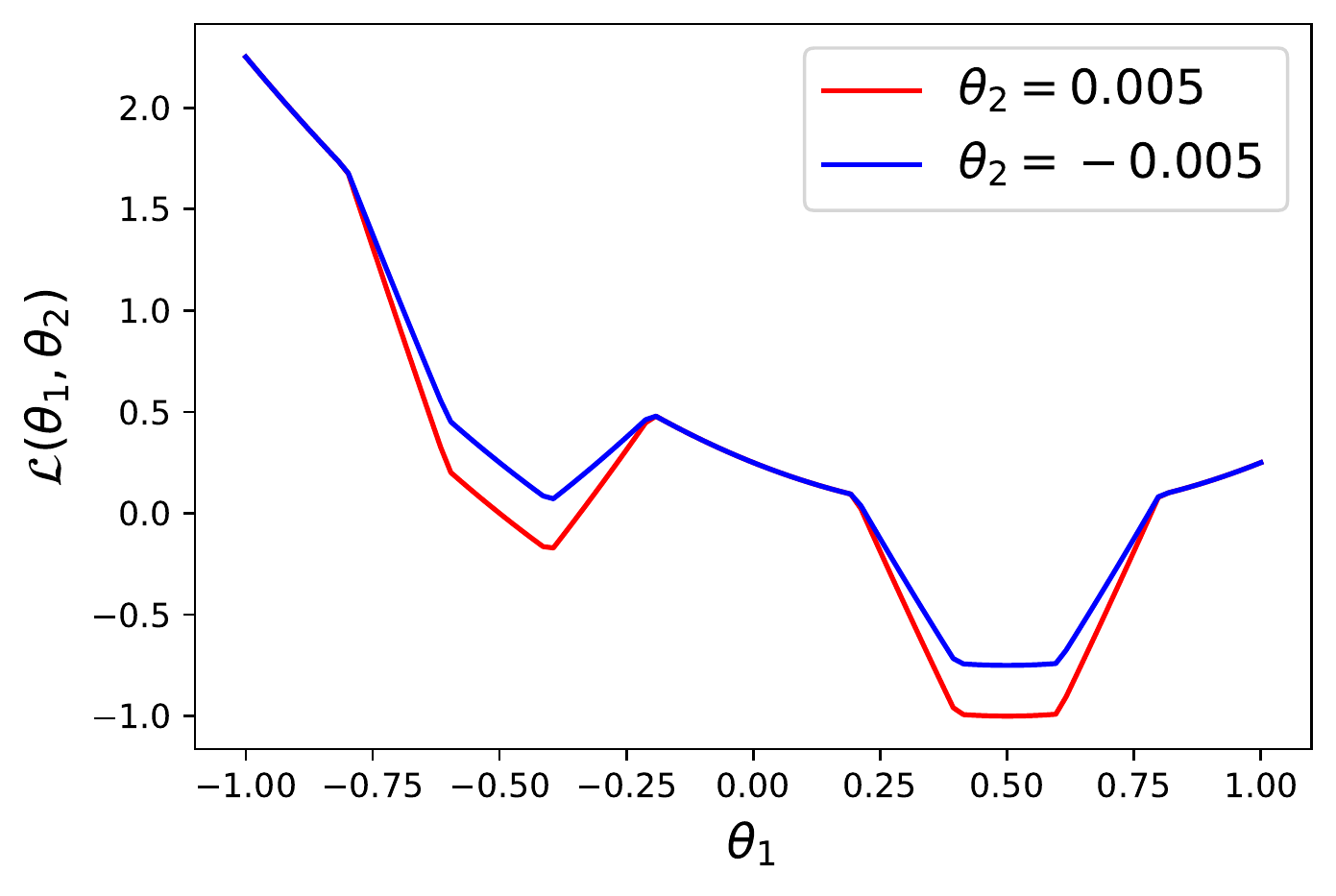}
\end{center}
\caption{A projection of our two-dimensional toy example loss onto $\theta_2 = 0.005$ and $\theta_2 = -0.005$.}
\label{fig:wells_projected}
\end{figure}

\mypar{Toy example of the loss function} In Figure~\ref{fig:wells} we provide a visualization of our loss $\ell( \cdot, d)$ for a single data point $d=(0.5, 0.005)$ as defined in Eq.~\eqref{def:loss} for $p=1$. Here, to simplify the visualization we have chosen $r = 0.01$, $M = 0.5$, $R_1 = 0.1$, $R_2 = 0.2$, which may not correspond to the actual values we choose in our construction. This gives $S = [-0.6, -0.4] \cup [0.4, 0.6]$, and $q(\theta_1) = 0$ if $\theta_1 \in [-\infty, -0.8] \cup [-0.2, 0.2] \cup [0.8, \infty]$. Since $0.5 \in S$ and thus $q(0.5) = 1$, the minimizer is $(0.5, 0.005)$.  We can observe the following. 

The first stage of the optimization (which corresponds to pretraining) tries to find the right part of the basin $S$ with small $\ell_1(\theta_1)$. For a fixed $\theta_2$, $\ell$ is a quadratic with respect to $\theta_1$, except for the  ``wells'' centered at $\theta_1 = 0.5$ and $ -0.5$ (Figure~\ref{fig:wells_projected}). In our construction, the population minimizer of $\theta_1$ would always be in one of the basins in $S = [-0.6, -0.4] \cup [0.4, 0.6]$. Note that the two basins are disconnected only because of the choice of $p=1$.  

The second stage of the optimization (which corresponds to fine-tuning) tries to  minimize the second loss $\ell_2(\theta_2)$. 
For a fixed $\theta_1$, $\ell$ is a quadratic with respect to $\theta_2$. The strong convexity of this quadratic increases with $q$; when $q(\theta_1) = 0$ (e.g. at $\theta_1 = 0)$ then $\ell$ is a constant with respect to $\theta_2$. 

In particular, we can see from Figure~\ref{fig:wells} that if we are at the origin, we can see that a (non-noisy) gradient step will only optimize over $\theta_1$, but once $\theta_1$ is inside $S$ then gradient steps will optimize both $\theta_1$ and $\theta_2$. Furthermore, if we start at $\theta_1$ in $S$ and run (DP) gradient descent, the scale of $\theta_2$, which is controlled by the choice of $r$ in the definition of $\ell_2$,  is much smaller than the scale of $\theta_1$. So it should be possible to optimize $\theta_2$ using gradient descent once $\theta_1$ is inside $S$, without causing $\theta_1$ to move very far. This roughly corresponds to fine-tuning staying within a basin in our hypothesis.

\subsection{Analysis} 
\label{sec:id_analysis} 

With the above construction, we formally guarantee that for certain sizes of public and private datasets, both datasets are necessary to optimize the loss to a desired level. We defer the proof to Appendix~\ref{app:wells}. 

\begin{thm}\label{thm:quadratic-unconstrained}
For every integer $p \geq 1$, for some $r > 0$, $\calC = \mathbb{R}^{p^4} \times \mathbb{R}^p, \calD = B_{p^4}(0, 1) \times B_{p}(0, r)$ there exists $\ell$ and a set of distributions $\calT$ over $\calD$ such that:
\begin{enumerate}
    \item[(1)] For $\delta = o(1/p^2)$, any $(1, \delta)$-DP algorithm $\calA_{priv} : \calD^{p^2} \rightarrow \calC$, and any $\calA_{pub}: \calD^p \rightarrow \calC$ there exists $\tau \in \calT$ such that:
    \begin{align*}
    &\Expect{D \sim \tau^{p^2}, \theta \sim \calA_{priv}(D)}{\calL(\theta)} = \calL(\theta^*(\tau)) + \Omega(1),
    \end{align*}
    \begin{align*}
    &\Expect{D \sim \tau^p, \theta \sim \calA_{pub}(D)}{\calL(\theta)} = \calL(\theta^*(\tau)) + \Omega(1)
    \end{align*}
    \item[(2)]
    For any $\delta \geq 2^{-p}$, there exists an algorithm $\calA_{mixed} : \calD^{p+p^2} \rightarrow \calC$ which runs gradient descent on the first $p$ examples, followed by $(1, \delta)$-DP-SGD on the last $p^2$ examples, such that for any $\tau \in \calT$:
    \[\Expect{D \sim \tau^n, \theta \sim \calA_{mixed}(D)}{\calL(\theta)} = \calL(\theta^*(\tau)) + \tilde{O}(1/p)\]
\end{enumerate}
\end{thm}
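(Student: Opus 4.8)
The plan is to prove the two parts of Theorem~\ref{thm:quadratic-unconstrained} by leveraging the concatenated structure of the loss $\ell$ defined in Eq.~\eqref{def:loss}: the first $p^4$ coordinates hide the private lower bound of Lemma~\ref{lem:privatesclb}, the last $p$ coordinates hide the public lower bound of Lemma~\ref{lem:publicsclb}, and the gate $q(\theta_1)$ ensures the two pieces interact only inside the basin $S$. Throughout, I instantiate Lemma~\ref{lem:privatesclb} with dimension $p^4$, $\epsilon = 1$, $n = n_{priv} = p^2$ (so that $p^4 \le \epsilon^2 n^2$ holds with equality and $\delta = o(1/n) = o(1/p^2)$ is the stated regime), yielding the family $\calT_1$ with minimizer norm $M = \Omega(\sqrt{p^4}/(\epsilon n)) = \Omega(1)$; and I instantiate Lemma~\ref{lem:publicsclb} with dimension $p$, rescaled by $r$, on $n = n_{pub} = p$ samples, yielding $\calT_2'$. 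The family $\calT$ consists of all product distributions $\tau_1 \otimes \tau_2$ with $\tau_1 \in \calT_1$ and $\tau_2 \in \calT_2'$.

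\textbf{Part (1), private lower bound.} Given $\calA_{priv}: \calD^{p^2} \to \calC$, I post-process it into an $(1,\delta)$-DP mean-estimation algorithm on the first block: run $\calA_{priv}$, output $\theta_1$. Since $\ell_2 \le 0$, for any fixed $\tau_2$ the population loss decomposes as $\calL(\theta) = \calL_1(\theta_1) + p\, q(\theta_1)\, \calL_2(\theta_2)$, and $\calL(\theta^*(\tau)) = \calL_1(\theta_1^*) + p\,\calL_2(\theta_2^*)$ since the optimum puts $\theta_1$ in $S$ (where $q=1$) and minimizes $\ell_2$. The key point is that the $\ell_2$-contribution is bounded: $\ell_2 \in [-9/2, 0]$, so $p\,q(\theta_1)\calL_2(\theta_2) \in [-9p/2, 0]$, and by choosing the scaling so that the public-side optimal loss satisfies $p\,\calL_2(\theta_2^*) = -\Theta(1)$ (this fixes $r$, matching the $\Omega(1/n)$ term of Lemma~\ref{lem:publicsclb} against the factor $p$ and $n_{pub}=p$), the difference $p\,q(\theta_1)\calL_2(\theta_2) - p\,\calL_2(\theta_2^*)$ is $O(1)$ regardless of $\theta_1,\theta_2$. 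Hence the excess loss is at least $\calL_1(\theta_1) - \calL_1(\theta_1^*) - O(1)$, and Lemma~\ref{lem:privatesclb} forces $\calL_1(\theta_1)-\calL_1(\theta_1^*) = \Omega(p/(\epsilon^2 n^2) + 1/n) = \Omega(p^4/p^4 + 1/p^2) = \Omega(1)$ for some $\tau_1(\calA_{priv}) \in \calT_1$. I must also check the truncation in $\ell_1 = \min\{\tfrac12\ltwo{\theta_1-d_1}^2, \tfrac92\}$ does not help the algorithm: on $\calD_1 = B_{p^4}(0,1)$ the hard instances have $\ltwo{\theta_1^*} = M = \Theta(1) < 1$, so truncation only affects $\theta_1$ far from optimal, which cannot decrease excess loss — this is the content of the ``as long as $\ltwo{\theta_1}\le 2$'' remark. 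Combining with an appropriate $\tau_2 \in \calT_2'$ (any element works) gives $\tau \in \calT$ with the claimed $\Omega(1)$ private lower bound. Symmetrically, for $\calA_{pub}: \calD^p \to \calC$, I post-process to a public mean-estimation algorithm on the second block, but now I must first argue it lands in $S$: since $n_{pub} = p$ and the first block is effectively a $p^4$-dimensional mean estimation with $p$ samples, actually here the trick is reversed — any $\calA_{pub}$ that does reach $S$ still cannot optimize $\ell_2$ well by Lemma~\ref{lem:publicsclb} (scaled), incurring excess $\Omega(p \cdot 1/p \cdot (1/r^2)\cdot r^2) = \Omega(1)$ from the $\theta_2$ block; and if it fails to reach $S$ it incurs $\Omega(1)$ from the $\theta_1$ block because then $q(\theta_1) < 1$ loses a constant fraction of the (negative) $p\,\calL_2$ reward, OR directly from $\calL_1$. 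I would handle this by a case split on whether $\E[\ltwo{\theta_1 - \Pi_S(\theta_1)}] \le R_2/2$.

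\textbf{Part (2), the mixed algorithm.} Here $\calA_{mixed}$ runs plain (non-private) gradient descent on the $p$ public samples to optimize $\theta_1$ — starting from the origin, the $\theta_2$-gradient vanishes since $q(0)=0$ (as $R_2 < M - R_1$), and GD on the empirical $\ell_1$ drives $\theta_1$ toward the empirical minimizer, which by a vector Azuma concentration \cite{hayes03vectorazuma} lies within $o(1)$ of $\theta_1^*$, hence well inside $S$ and far from $\partial S$ (this is the role of choosing $R_1$ large enough, done in Appendix~\ref{app:wells}). Once $\theta_1 \in S$, I switch to $(1,\delta)$-DP-SGD on the $p^2$ private samples. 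Now I need: (a) DP-SGD on $\ell_2$ (rescaled by $r$ and gated by $p\,q(\theta_1) \approx p$) achieves excess loss $\tilde O(p \cdot (\text{DP-SCO rate in dim } p \text{ with } p^2 \text{ samples}) ) = \tilde O(p \cdot (p/(\epsilon^2 p^4) + 1/p^2)) = \tilde O(1/p)$, using the matching DP-SGD upper bound cited after Lemma~\ref{lem:publicsclb} — the extra $p$ from the gate and the $1/r^2$ from rescaling must cancel against the choice of $r$, consistent with Part (1); and (b) running DP-SGD on the full $\ell$ does not push $\theta_1$ out of $S$: since the $\theta_1$-gradient of the cross term is $p \cdot \nabla q(\theta_1) \cdot \ell_2(\theta_2) = O(p/R_2 \cdot 1)$ which looks large, I instead observe that inside $S$ we have $\nabla q \equiv 0$ (the gate is flat, $q=1$, on all of $S$), so the only $\theta_1$-dynamics come from $\nabla_{\theta_1}\ell_1$, which is a contraction toward $\theta_1^*$ and keeps $\theta_1$ in $S$; the DP noise on the $\theta_1$ block over $p^2$ steps accumulates to norm $\tilde O(\sqrt{p^2 \cdot \sigma^2 \cdot p^4})$-ish, so I must pick step sizes / clipping so this stays below $R_1$, and the truncation $\min\{\cdot, 9/2\}$ in $\ell_1$ caps the damage in the rare large-noise event — this is why the min is there, as the text notes. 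The low-probability escape event contributes at most $O(\delta_{\text{event}} \cdot (\text{bounded loss}))$ to the expectation, absorbed into $\tilde O(1/p)$.

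\textbf{Main obstacle.} The delicate part is the interaction control in Part (2)(b): simultaneously running DP-SGD on both blocks and certifying $\theta_1$ stays in $S$ for all $p^2$ iterations with the truncated $\ell_1$, while also getting the sharp $\tilde O(1/p)$ rate on $\theta_2$ — the noise scales $\sigma = \Theta(\sqrt{p^4 + p}\,\polylog/\epsilon n_{priv})$ (for the full $(p^4+p)$-dimensional gradient under $(1,\delta)$-DP) must be shown large enough to trigger the private lower bound on the $\theta_1$ block for a cold-start algorithm (Part 1) yet small enough that, warm-started inside $S$, the contraction of $\nabla\ell_1$ dominates it (Part 2). This is a balance between strong convexity of $\ell_1$ near $\theta_1^*$, the basin half-width $R_1$, the iteration count $p^2$, and the dimension $p^4$; getting constants consistent across both parts (and fixing $r$, $R_1$, $R_2$, step sizes accordingly) is where the real work lies, and is presumably why the precise parameter choices are deferred to Appendix~\ref{app:wells}.
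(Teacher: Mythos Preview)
Your overall architecture matches the paper's, but two of your key steps are wrong in ways that matter.

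\textbf{Part (1), private bound.} You claim the $\ell_2$-contribution to the excess, namely $p\,q(\theta_1)\calL_2(\theta_2) - p\,\calL_2(\theta_2^*)$, is ``$O(1)$'' and that this is arranged by choosing $r$ so that $p\,\calL_2(\theta_2^*) = -\Theta(1)$. Both assertions are false: $\ell_2$ is normalized by $1/r^2$, so $\calL_2(\theta_2^*)$ is a constant near $-9/2$ independent of $r$, and hence $p\,\calL_2(\theta_2^*) = -\Theta(p)$. Worse, ``excess $\ge \Omega(1) - O(1)$'' does not yield $\Omega(1)$ without tracking constants. The fix is sharper and simpler: since $q(\theta_1)\le 1$ and $\calL_2\le 0$, one has $q(\theta_1)\calL_2(\theta_2) \ge \calL_2(\theta_2) \ge \calL_2(\theta_2^*)$, so the $\ell_2$-bracket is \emph{nonnegative}, and the excess on $\ell$ is at least the excess on $\ell_1$ alone. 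This is exactly the paper's inequality~\eqref{eq:qatopt}. Your public bound via case-splitting on whether $\theta_1$ reaches $S$ can be made to work, but the paper avoids it entirely by the symmetric trick: replace $\theta_1$ by $\theta_1^*(\tau_1)$ (which only lowers the loss since $q(\theta_1^*)=1$ maximizes the negative $\ell_2$-term), reducing directly to the $\ell_2$ lower bound.

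\textbf{Part (2), the role of $r$.} You identify the right tension --- keeping $\theta_1$ inside $S$ throughout DP-SGD --- but misidentify the resolution. It is not the contraction of $\nabla\ell_1$ versus the full-dimensional DP noise (and $\sigma$ is set by the gradient sensitivity $L$, not by $\sqrt{p^4+p}$). The actual mechanism is that $r$ is a \emph{free parameter chosen for the upper bound, not the lower bound}: since $\ell_2$ is $(p/r^2)$-strongly convex, the DP-SGD step size optimal for $\theta_2$ is $\eta_t = r^2/(pt)$. With this step size the total $\theta_1$-movement from $O(1)$-bounded $\ell_1$-gradients is $O(1)\sum_t \eta_t = O(r^2 p\log p)$, and from noise is $O(p^{2}\sqrt{\sum_t\eta_t^2}\,\sigma) = O(p^{3/2} r\sqrt{\log(1/\delta)})$. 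Taking $r = O\bigl(p^{-5/2}/\sqrt{\log(1/\delta)}\bigr)$ makes both $O(1/p)$, so $\theta_1$ never leaves $S$. This is the missing idea in your ``main obstacle'' paragraph: shrinking the $\theta_2$-scale shrinks the step size, which in turn freezes $\theta_1$.
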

This demonstrates that there exist data distributions where a small number of public in-distribution data is necessary to achieve small loss, and pretraining on that public data is sufficient for DP-SGD to achieve the desired level of loss. The first part of the theorem shows that there are data distributions where neither a small-size, $n_{pub}=p$, public data or a large-size, $n_{priv}=p^2$, private data can reach the desired loss. However, on the same data distribution, pretraining on the small-size public data, followed by finetuning on the large-size private data, achieves a desired level, $O(1/p)$, of the excess loss. 

\begin{proof}[Proof Sketch of Theorem~\ref{thm:quadratic-unconstrained}]
The high-level idea behind the construction is: Using private data alone cannot achieve risk $o(1)$ on $\ell_1$, because $\ell_1$ has a high dimension, but using public data can achieve risk $O(1/p)$ because the public mean estimation risk guarantees are dimension-independent. Similarly, using public data alone cannot achieve risk $o(1)$ on $\ell_2$, because $\ell_2$ has a multiplier of $p$ and the amount of public data we are allowed to use is small. However, using private data can achieve risk $O(1/p)$ on $\ell_2$ because $\ell_2$ has low dimension, and there is more private data to use. 

To prove (1) using these observations, we show that the risk guarantee of $\calA$ on $\ell$ is at least its risk on $\ell_1$ or $\ell_2$ alone. If $\calA_{pub}$ only uses public data, this implies a lower bound on $\calA_{pub}$'s risk on $\ell$ from Lemma~\ref{lem:publicsclb}, which holds for some distribution $\tau_2 \in \calT'_2$. Similarly, if $\calA_{priv}$ only uses private data, this implies a lower bound on $\calA_{priv}$'s risk on  from Lemma~\ref{lem:privatesclb}, for some distribution $\tau_1 \in \calT_1$. Then, the product distribution $\tau = \tau_1 \times \tau_2$ gives a simultaneous lower bound on the risk of $\calA_{pub}$ and $\calA_{priv}$, as desired.

To prove (2), we observe that a single step of (full-batch) gradient descent on the public data takes $\theta_1$ to the empirical minimizer of $\ell_1$, which achieves risk $O(1/p)$ for $\ell_1$. If we use an initialization such that $q(\theta_1) = 0$, a single step of gradient descent has no effect on $\theta_2$, since the gradient of $\ell$ with respect to $\theta_2$ at the initialization is zero. Furthermore, if $R_1$ is sufficiently large, then with high probability after this single step $\theta_1 \in S$ and is far from the boundary of $S$, i.e. $q(\theta_1) = 1$ and we have $\ell = \ell_1 + p \cdot \ell_2$. Then, running DP-SGD with optimal parameters from this point will take $\theta_2$ to a point achieving risk $O(1/p)$ on $p \cdot \ell_2$. However, DP-SGD will also move $\theta_1$, which could worsen our risk on $\ell_1$ substantially. We show that if $r$ is sufficiently small, then for DP-SGD with optimal parameters, the amount by which $\theta_1$ moves is $O(1/p)$, and in turn $\theta_1$ remains in $S$ and the risk guarantee on $\ell_1$ does not worsen by more than $O(1/p)$. Then, our overall risk guarantee is at most the sum of the risk guarantee on $\ell_1$ and $p \cdot \ell_2$ individually, which is $O(1/p)$. 
\end{proof}


\subsection{Pretraining on out-of-distribution public data} 
\label{sec:ood} 

A more common setting in practice is when out-of-distribution large-scale public data is used in pretraining, as we surveyed in the introduction and at the beginning of \cref{sec:construct}. We modify our previous construction  in  Theorem~\ref{thm:quadratic-unconstrained} so that $(i)$ there is a distribution mismatch between the public and private examples and $(ii)$ an arbitrarily large amount, $n_{pub}$, of public data is available. 

\begin{thm}\label{thm:quadratic-mismatch}
For every integer $p \geq 1$ and $n_{pub} \geq p$, for some $r > 0$, $\calC = \mathbb{R}^{p^4} \times \mathbb{R}^p, \calD = B_{p^4}(0, 1) \times B_{p}(0, r)$ there exists $\ell$ and a set  $\calT$ of pairs of distributions $(\tau_{pub}, \tau_{priv})$ over $\calD$ such that:
\begin{enumerate}
    \item[(1)] For $\delta = o(1/p^2)$, any $(1, \delta)$-DP algorithm $\calA_{priv} : \calD^{p^2} \rightarrow \calC$, and any $\calA_{pub}: \calD^p \rightarrow \calC$ there exists $(\tau_{pub}, \tau_{priv}) \in \calT$ such that:
    \begin{align*}
    &\Expect{D \sim \tau_{priv}^{p^2}, \theta \sim \calA_{priv}(D)}{\calL(\theta)} = \calL(\theta^*(\tau)) + \Omega(1),
    \end{align*}
    \begin{align*}
    &\Expect{D \sim \tau_{pub}^{n_{pub}}, \theta \sim \calA_{pub}(D)}{\calL(\theta)} = \calL(\theta^*(\tau)) + \Omega(1)
    \end{align*}
    \item[(2)]
    For any $\delta \geq 2^{-p}$, there exists an algorithm $\calA_{mixed} : \calD^{n_{pub}+p^2} \rightarrow \calC$ which runs gradient descent on the first $n_{pub}$ examples, followed by $(1, \delta)$-DP-SGD on the last $p^2$ examples, such that for any $\tau \in \calT$:
    \[\Expect{D \sim \tau_{pub}^{n_{pub}} \times \tau_{priv}^{p^2}, \theta \sim \calA_{mixed}(D)}{\calL(\theta)}= \calL(\theta^*(\tau)) + \tilde{O}(1/p)\]
\end{enumerate}

Here, $\calL$ refers to the population loss over $\tau_{priv}$.
\end{thm}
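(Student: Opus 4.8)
The plan is to keep the loss $\ell$, the basin $S$, and the parameters $M,R_1,R_2,r$ exactly as in the construction behind Theorem~\ref{thm:quadratic-unconstrained}, and change only how the public and private distributions are paired. For each hard instance $\tau_1\in\calT_1$ for $\ell_1$ (the $p^4$-dimensional private mean-estimation family of Lemma~\ref{lem:privatesclb} with $\epsilon=1$, $n=p^2$) and each $\tau_2\in\calT_2'$ for $\ell_2$, let $\calT$ contain the pair $(\tau_{pub},\tau_{priv})$ in which $\tau_{priv}$ has independent marginals $\tau_1$ on the first $p^4$ coordinates and $\tau_2$ on the last $p$ coordinates (exactly the $\tau$ of Theorem~\ref{thm:quadratic-unconstrained}), while $\tau_{pub}$ keeps the \emph{same} marginal $\tau_1$ on the first $p^4$ coordinates but is a point mass at the origin on the last $p$ coordinates. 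So the public data is just as informative about basin selection ($\ell_1$) as before, yet carries \emph{no} information about $\theta^*_2(\tau_{priv})$, which is where the distribution mismatch lives; we keep $\calL=\Expect{d\sim\tau_{priv}}{\ell(\cdot;d)}$, $\theta^*(\tau):=\argmin_\theta\calL(\theta)$, and, writing $\calL_i^*$ for the population minimum of $\ell_i$, still have $\theta^*_1(\tau)\in S$, so $\calL(\theta^*(\tau))=\calL_1^*+p\,\calL_2^*$. The private lower bound of Part (1) is then verbatim that of Theorem~\ref{thm:quadratic-unconstrained}: $\calA_{priv}$'s sample comes from $\tau_{priv}$ exactly as there, and its $\theta_2$-coordinates — being independent of its $\theta_1$-coordinates — are useless for estimating $\theta^*_1(\tau_{priv})$, so Lemma~\ref{lem:privatesclb} applied to the $p^4$-dimensional block (using that $\ell_1$ agrees with the quadratic while $\ltwo{\theta_1}\le2$) gives some $\tau_1$ — independent of which $\tau_2$ completes the pair — on which $\calA_{priv}$ has excess $\ell_1$-risk $\Omega(1)$, and since $q\le1,\ \ell_2\le0$ make the excess $\ell$-risk dominate the excess $\ell_1$-risk, $\calA_{priv}$ has excess $\calL$-risk $\Omega(1)$.

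\smallskip\noindent\textbf{The public lower bound} is the only new ingredient. Fix $\calA_{pub}$ and the $\tau_1$ from the private bound; it remains to choose $\tau_2$. The key point: the law of $\calA_{pub}$'s output $(\hat\theta_1,\hat\theta_2)$ does not depend on $\tau_2$, since the public dataset has $\theta_1$-coordinates $\sim\tau_1$ and $\theta_2$-coordinates identically zero regardless of $\tau_2$. Pick from $\calT_2'$ two antipodal instances with means $\pm\mu_0$, $\ltwo{\mu_0}=\Theta(r/\sqrt p)$. A direct computation with the explicit form of $\ell_2$ shows the excess $\ell_2$-risk of any $\theta_2$ is at least $g(\ltwo{\theta_2-\mu(\tau_2)})$, with $g(t)\ge\min\{c,\,c\,t^2/r^2\}$ for a constant $c>0$ (equality in the quadratic regime, capped at a constant far away), and the same $g$ for the two instances by symmetry. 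A pointwise parallelogram inequality gives $g(\ltwo{z-\mu_0})+g(\ltwo{z+\mu_0})\ge\Omega(1/p)$ for every $z$; taking expectations and using that $\Pr[\hat\theta_1\in S]$ is $\tau_2$-independent, some sign $s$ satisfies $\Ex{\mathbbm 1[\hat\theta_1\in S]\,g(\ltwo{\hat\theta_2-s\mu_0})}\ge\Omega(\Pr[\hat\theta_1\in S]/p)$, and we let $\tau_2$ be that antipodal instance. On the complementary event $\hat\theta_1\notin S$: since $\theta^*_1(\tau_{priv})$ lies at radial distance $\ge R_1-o(1)=\Omega(1)$ inside $S$, one has $\ltwo{\hat\theta_1-\theta^*_1(\tau_{priv})}=\Omega(1)$, and with $R_1$ small enough that $\ell_1$ is still quadratic there the excess $\ell_1$-risk alone is $\Omega(1)$. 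Using $q\le1,\ \ell_2\le0$ to discard the cross terms in $q(\hat\theta_1)\,\Expect{d_2\sim\tau_2}{\ell_2(\hat\theta_2;d_2)}-\calL_2^*$, we get
\[
\Expect{D\sim\tau_{pub}^{n_{pub}},\,\theta\sim\calA_{pub}(D)}{\calL(\theta)}-\calL(\theta^*(\tau))\;\ge\;\Pr[\hat\theta_1\notin S]\cdot\Omega(1)\;+\;p\cdot\Omega\!\big(\Pr[\hat\theta_1\in S]/p\big)\;=\;\Omega(1),
\]
uniformly in $n_{pub}$, so the pair built from this $\tau_1,\tau_2$ is simultaneously bad for $\calA_{priv}$ and $\calA_{pub}$.

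\smallskip\noindent\textbf{Part (2) and the main obstacle.} Part (2) transfers with essentially no change, since $\calA_{mixed}$ uses public data only to move $\theta_1$ and private data only to move $\theta_2$: from the origin $q=0$ (as $R_2<M-R_1$) and $\nabla q$ vanishes, so one full-batch gradient step of size $1$ on the $n_{pub}\ge p$ public examples leaves $\theta_2=0$ and carries $\theta_1$ to the empirical mean of the public $\theta_1$-coordinates, which by a vector Azuma bound is within $\tilde O(1/\sqrt{n_{pub}})=o(R_1)$ of $\theta^*_1(\tau_{priv})=\theta^*_1(\tau_{pub})$ w.h.p.\ — hence inside $S$ far from $\partial S$ (so $q=1$, $\ell=\ell_1+p\ell_2$ henceforth) with $\ell_1$-risk already $\tilde O(1/p)$; then $(1,\delta)$-DP-SGD on the $p^2$ private examples, with the small step size and choice of $r$ from Theorem~\ref{thm:quadratic-unconstrained}, brings $\theta_2$ to excess $p\ell_2$-risk $\tilde O(1/p)$ (dimension $p$, $n=p^2$) while — exactly as argued there — perturbing $\theta_1$ by only $\tilde O(1/p)$, so $\theta_1$ stays in $S$ with $\ell_1$-risk still $\tilde O(1/p)$; summing gives excess $\calL$-risk $\tilde O(1/p)$ for every pair in $\calT$. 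The main obstacle is the public lower bound: one must defeat \emph{every} way $\calA_{pub}$ might exploit unlimited out-of-distribution public data — keeping $\hat\theta_1\in S$ keeps full weight $p$ on the $\ell_2$-term but forces $\hat\theta_2$ to be blind to $\theta^*_2(\tau_{priv})$, while leaving $S$ (even slightly) shrinks $q(\hat\theta_1)$ but pays $\Omega(1)$ on $\ell_1$ because $S$ has constant radial width. Making this dichotomy precise — via the $\tau_2$-blindness of $(\hat\theta_1,\hat\theta_2)$, the parallelogram estimate over the two antipodal instances, and the case split on $\hat\theta_1\in S$ versus $\hat\theta_1\notin S$ — is the heart of the proof; the private lower bound and the $\calA_{mixed}$ analysis carry over from Theorem~\ref{thm:quadratic-unconstrained} essentially unchanged.
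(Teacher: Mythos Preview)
Your construction of $\calT$ (public marginal $\tau_1\times\{\text{point mass at }0\}$, private marginal $\tau_1\times\tau_2$), the private lower bound, and Part~(2) are exactly the paper's approach and are fine.

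The public lower bound, however, has a concrete gap. Your case split hinges on the claim that if $\hat\theta_1\notin S$ then $\ltwo{\hat\theta_1-\theta^*_1(\tau_{priv})}=\Omega(1)$, justified by ``$R_1-o(1)=\Omega(1)$''. But in the construction you say you are keeping, $R_1=1/p^2+\kappa\log p/\sqrt p=o(1)$; hence $\hat\theta_1$ can sit just outside $S$ with $\ltwo{\hat\theta_1-\theta^*_1}$ as small as $\Theta(R_1)$, and the excess $\ell_1$-risk on the event $\{\hat\theta_1\notin S\}$ is only $\Theta(R_1^2)=o(1)$. The dichotomy ``in $S$ vs.\ not in $S$'' therefore does not deliver the $\Omega(1)$ you need. (A split on, say, $q(\hat\theta_1)\ge 1/2$ versus $q(\hat\theta_1)<1/2$ would work, since in the latter case the term $p(1-q(\hat\theta_1))\,|\calL_2^*|$ already contributes $\Omega(p)$; but that is not what you wrote.)

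The paper sidesteps the case split entirely with a single pointwise inequality: because $\calL_1(\theta_1)\ge\calL_1(\theta^*_1)$, $q(\theta_1)\le 1=q(\theta^*_1)$, and $\calL_2\le 0$, one has $\calL(\theta_1,\theta_2)\ge\calL(\theta^*_1,\theta_2)$ for every $(\theta_1,\theta_2)$. This reduces the excess $\calL$-risk directly to $p\big(\calL_2(\hat\theta_2)-\calL_2^*\big)$ with $q=1$, after which your own observation---that the law of $\hat\theta_2$ is independent of $\tau_2$---lets you invoke Lemma~\ref{lem:publicsclb} (or your two-point argument) to get $\Omega(1/p)$ on $\ell_2$ and hence $\Omega(1)$ on $p\ell_2$, uniformly in $n_{pub}$. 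This is both simpler and avoids the need to posit specific antipodal instances inside $\calT_2'$.
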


This demonstrates that there exist data distributions where out-of-distribution public data is necessary to achieve small test loss on the target private task, and pretraining on the OOD public data is sufficient for DP-SGD to achieve the desired test loss. Note that all three cases are evaluated on the same private population loss, as is the case in real-world scenarios where we care about the performance on the private task.  

We prove Theorem~\ref{thm:quadratic-mismatch} in Appendix~\ref{app:wells} and give here a proof sketch for what modifications of Theorem~\ref{thm:quadratic-unconstrained} are needed. In particular, the value of $d_2$ in the public examples is irrelevant in the upper bound in Theorem~\ref{thm:quadratic-unconstrained}. For example, we could have $d_2 = 0$ in all public examples, and the upper bound is unaffected. With the extra freedom we have in the construction under this distribution mismatch, showing the lower bound on the risk on the private task for algorithms using only public data is easy; the value of $d_2$ in the public examples encodes no information about the distribution of $d_2$ in the private examples, so clearly no algorithm with only access to public data can achieve good risk on $\ell_2$ alone, regardless of how much public data it has access to. 

{\bf Data abundance:} If we had $p^2$ ID public examples or $p^5$ private examples in Theorem~\ref{thm:quadratic-unconstrained}, we could achieve risk $O(1/p)$ in the above construction using only public data or only private data. Of course, if we also have the distribution mismatch in the preceding paragraph, no amount of public data achieves low risk on the private population.
In light of this, Theorem~\ref{thm:quadratic-unconstrained} should not be interpreted as saying that both public and private data are strictly necessary to optimize some loss functions. Instead, a better interpretation might be that a small amount of public data greatly reduces the amount of private data needed to solve an optimization problem. This can be seen as theoretical backing for an empirical observation made in \cite{tramer2020differentially,de2022unlocking,li2021large,kerrigan2020differentially}. 

{\bf Convex losses:} Our construction is inherently non-convex, due to the term $q(\theta_1)$ we use to ``activate'' $\ell_2$ only after optimizing over the public data. Surprisingly, in Appendix~\ref{app:quadratic} we show that Theorem~\ref{thm:quadratic-unconstrained} can be proven even for (non-isotropic) quadratic losses, at the cost of operating in a constrained setting (i.e. $\calC$ is finite). The constrained requirement is necessary since unlike in the construction in this section, 
we cannot guarantee that gradient descent on the public data does not affect $\theta_2$. However, in the constrained setting we have the guarantee that $\theta_2$ cannot leave the constraint set, 
so it is okay to take (arbitrarily large) public gradient steps that affect $\theta_2$.
\section{Experiments}
\label{sec:exp} 

In this section, we conduct experiments to verify our hypothesis about the two-stage optimization phenomenon. 

\subsection{CIFAR10 Experiments}\label{sec:cifar}

{\bf Setup:} For the ID public data experiment in Figure~\ref{fig:pre_post_public} (left), we train a ConvNet model on CIFAR10 using DP-SGD. We train for 60 epochs with a clipping norm of one, learning rate of 0.001, batch size of 256, and Adam optimizer. Simulating an ID public data setting, we split CIFAR10 (60,000 images) into  a public dataset of size 2,000 and a private dataset of size 58,000. 
We use Adam optimizer with learning rate of 0.002 for the public dataset. For the large-size OOD public data in Figure~\ref{fig:pre_post_public} (right), 
we used 20,000 images from the training part of the CINIC10 images as the public data.

{\bf Results:} In Figure~\ref{fig:pre_post_public},  we allow  a limited number of epochs $T_{pub}$ on the public data. We show test accuracy as a function of $t$ (the x-axis), which is the number of epochs used in public pretraining. The remaining $T_{pub}-t$ epochs are used in public post-training after the private training. For ID public data in the left panel, we choose $T_{pub} = 200$. Using this budget for pretraining has the highest accuracy. This demonstrates that the initial rounds of training are the most sensitive to noise, as is the case in both our hypothesis from Section~\ref{sec:intro} and our theoretical construction in Section~\ref{sec:construct}. Note that the benefits of longer pretraining is small after $t = 100$. It is possible that after around 100 epochs, pretraining converges to a good basin and  the benefits of public pretraining plateaus afterwards. We see the same trend with OOD public data using CINIC10 dataset with $T_{pub} = 30$, shown in Figure~\ref{fig:pre_post_public} (right). Again, we observe that reducing privacy noise in the earlier rounds of training is more beneficial.

\begin{figure}[t]
    \centering
    \includegraphics[scale=0.49]{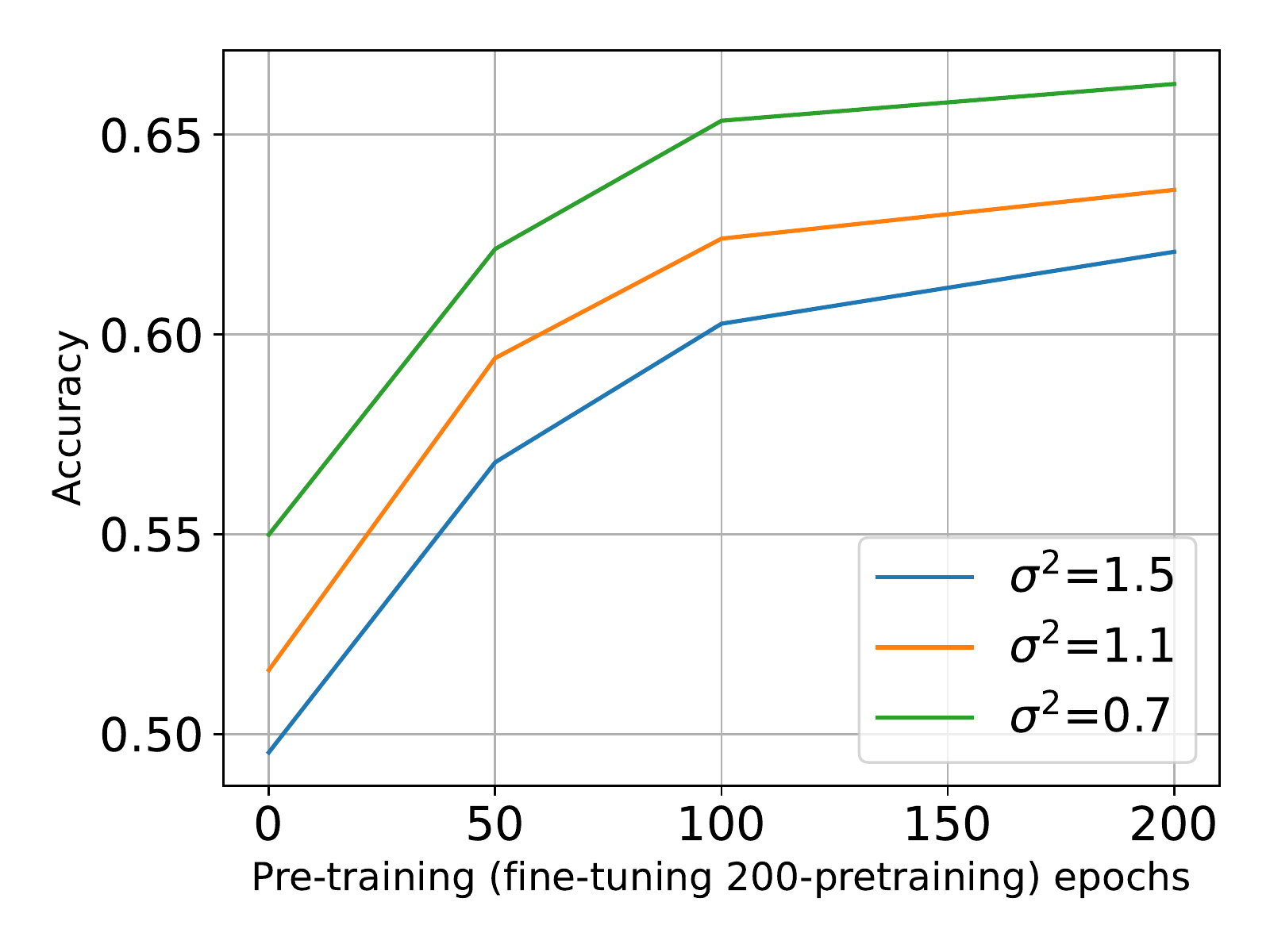} 
    \includegraphics[scale=0.49]{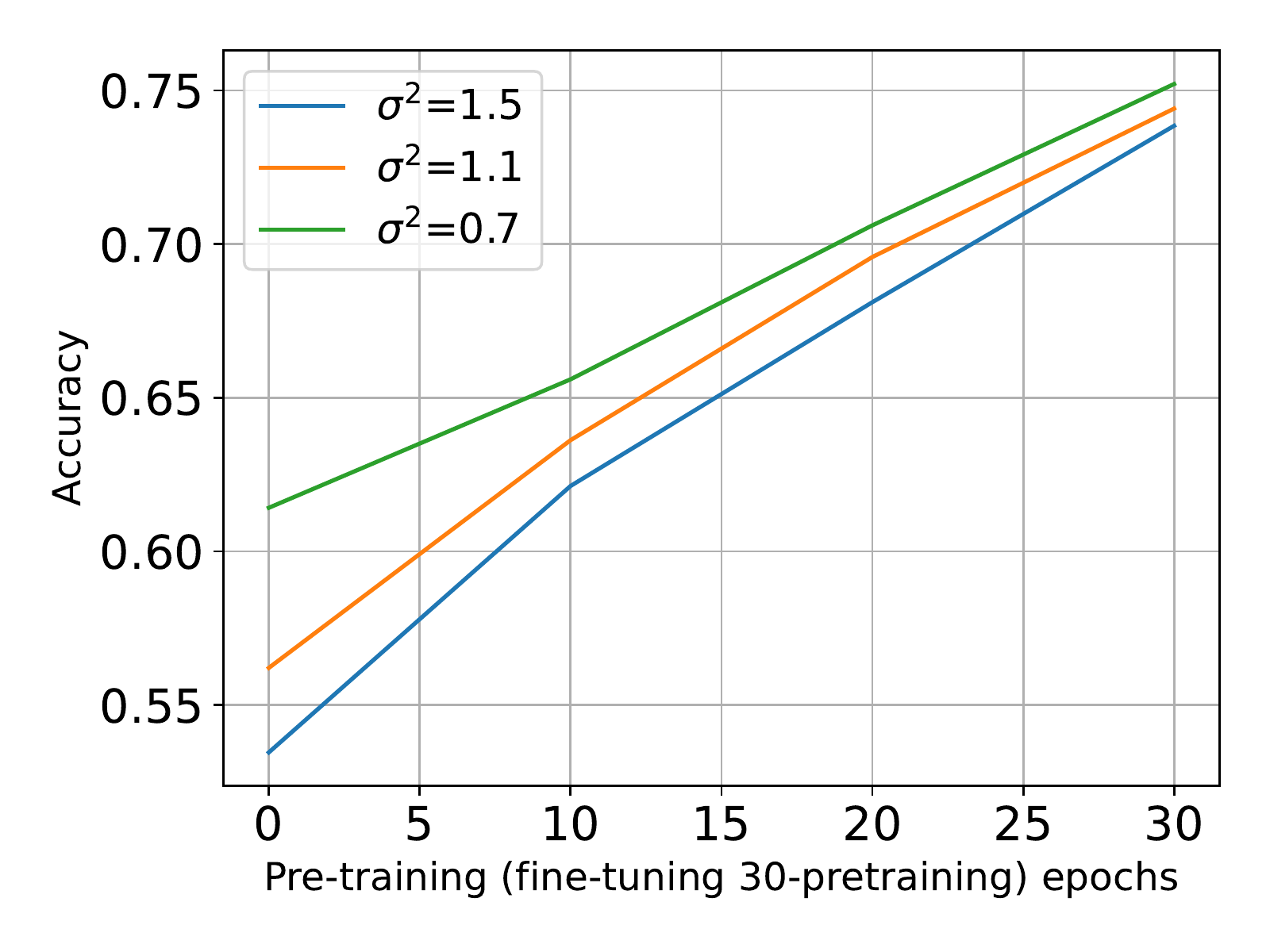}
    \caption{On CIFAR10, pretraining on the public data  significantly improves accuracy compared to  post-training on the public data for both ID public data (left) and OOD public data (right).}
    \label{fig:pre_post_public}
\end{figure}

To further demonstrate the importance of the earlier iterations in the training, we designed an experiment where instead of using the same privacy budget for all iterations of private training, we train the first iteration with a lower noise multiplier (using more privacy budget) and compared it a setting where we train the last iteration with the lower noise multiplier. Table~\ref{tab:dp_high_budget} compares the results for various choices of the end-to-end $\varepsilon$.
Again, we observe that reducing privacy noise in the earlier rounds of training is more beneficial.
  
\begin{table}[]
    \centering
    \begin{tabular}{ccc}
    \toprule
        $\varepsilon$ & first epoch &  last epoch \\
    \midrule
        1 & 46.7$\%\pm$ 0.3   & 46.3$\% \pm$ 0.3  \\
        3 & 49.6$\%\pm$ 0.6  & 48.0$\% \pm$ 0.5 \\
        8 & 54.0$\%\pm$ 0.8  & 52.0$\% \pm$ 0.9 \\
    \bottomrule
    \end{tabular}
    \caption{Effect of having higher budget ($\sigma^2=0.6$) on the first epoch compared to the last epoch on CIFAR10.}
    \label{tab:dp_high_budget}
\end{table}

\subsection{Manifold on Large Speech Model}\label{sec:manifold}

\paragraph{Setup}
To better understand the geometry of the loss function for training machine learning models, we evaluate training a ConformerM~\cite{gulati2020conformer} model on Librispeech~\cite{panayotov2015librispeech} dataset with/without public data pretraining using DP-Adam.
Specifically, we train the following three models:
\begin{itemize}
    \item\textbf{Oracle model:}  We train a ConformerM model on the complete Librispeech dataset for 100k steps. This is considered as the global minima of the manifold.
    \item\textbf{Private model:} We train a ConformerM model on 90\% samples drawn uniformly from the Librispeech dataset using DP-Adam for 20k steps.
    \item\textbf{Private model with public pretraining:} We pretrain a ConformerM model on the 10\% of the samples with Adam for 10k steps and then fine-tune on the remaining 90\% samples with privacy for 1k steps. 
\end{itemize}
Note that the hyper-parameters for the latter two settings are tuned to optimize the test word error rate under the same privacy budget $\epsilon=9.8$.
We fix the privacy parameter $\delta$ to $10^{-6}$, ensuring that $\delta < n^{-1}$, where $n$ is the number of private samples.

\paragraph{Results}
As shown in Figure~\ref{fig:manifold}, we interpolate the three models above to draw a projected slice of the manifold.
From both the heatmap and the contour figures, we can tell that private model with public training falls into the same ``basin'' as the oracle model, which we refer to as the \textit{global minima basin}. The private model without pretraining falls into a different basin, separated from the global minima basin by a ``hill''.  This is evidence for our hypothesis that public pretraining is useful specifically because it picks a good basin. The $\ell_2$-distance between the oracle model and the private model with public pretraining is 671.22, much smaller than the distance between the oracle model and the private model which is 1738.27. This parallels the construction in Section~\ref{sec:construct}, in which private fine-tuning takes place on a smaller scale than pretraining on public data.

\begin{figure}[h]
    \centering
    \subfigure[]{\includegraphics[width=0.45\textwidth]{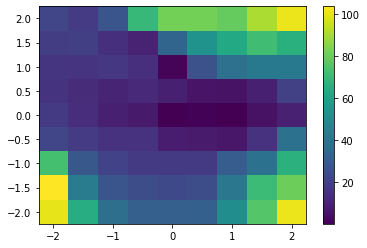}}
    \subfigure[]{\includegraphics[width=0.45\textwidth]{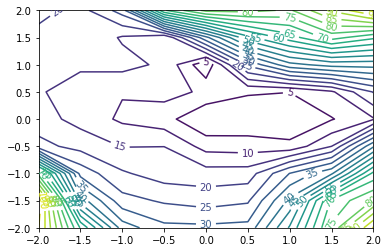}}
    \caption{Projected manifold of ConformerM on Librispeech by interpolating 3 models. (0, 0) is the oracle model. (0, 1) is the private model. (1, 0) is the private model with public pretraining. We can tell that (0, 0) and (1, 0) are within the same basin while (0, 1) is in a different basin separated by a hill on the manifold. The manifold is constructed by calculating cross entropy loss on a 128-sample subset of Librispeech's testother dataset.}
    \label{fig:manifold}
\end{figure}

\section{Discussion}
\label{sec:discuss} 

In this paper, we show  that there exist natural learning tasks where public data is necessary and sufficient 
to achieve a target accuracy under DP model training. This conclusion is independent of whether the public data is in-distribution with the private training data or not. Recently, \cite{tramer2022considerations} discussed the perils of indiscriminate use of public data in DP training, few of which are: $(i)$ Publicly available data does not necessary mean that one can use that dataset for training models without privacy consideration, as the trained model can release information from the dataset verbatim, and $(ii)$ Many existing empirical works on achieving better accuracy for DP training by using public data do not necessarily reflect realistic scenarios for model training. In particular, in real-world settings the available public data can be far out-of-distribution from the private dataset. The authors provide prescriptive recommendations on being judicious in the choice of public data for DP training. Our work is complementary to ~\cite{tramer2022considerations}, and we concur with all the concerns in their paper. Given our current impossibility result, and the concerns in~\cite{tramer2022considerations}, an important research question for future exploration is~{\it given a public dataset which may be far out-of distribution from the private training data, what is the best DP training procedure that exploits the public dataset to obtain higher accuracy?} To the best of our knowledge, all current works (see Section~\ref{sec:intro} for reference) on the use of public data in DP training do not provide an answer. 

Another question raised by our work is whether one can use the insight that the geometry changes after public pretraining to improve private fine-tuning in practice. That is, in practice the ideal algorithm for training from scratch on private data may not be the ideal algorithm for fine-tuning a pretrained model. While some works \cite{song21clipping, li2022does} observe that DP-SGD can inherently benefit from the geometry of the loss having certain properties, and others \cite{asi2021private, amid2022mirrordescent} develop algorithms that adapt to the local geometry, we are not aware of any work that uses properties of the geometry specific to the fine-tuning phase. One possibility is that if the loss function is locally convex after public pretraining, theoretical techniques whose utility guarantee depends on convexity might offer larger improvements for private fine-tuning in practice than for training from scratch. 

In our experiments, we observed that the first phase of model training is more sensitive to noise when we do fully private training from random initialization. 
As our two-phase hypothesis suggests, this phenomenon only occurs in non-convex optimization. In convex optimization, an opposite strategy of reducing the privacy noise towards the end of training helps more, as theoretically analyzed and empirically demonstrated in \cite{lee2018concentrated}. For non-convex optimization, \cite{hong2022dynamic} proposes the use of a decreasing noise multiplier under a strong condition on the loss function known as the Polyak-Lojasiewicz condition. This implies that vanilla gradient descent converges fast and does not apply for the typical loss landscapes of deep learning problems. For typical non-convex optimization experiments in our setting, smaller privacy noise at the beginning of the training improves performance (compared to having smaller privacy noise at the end of training). Of course, the need to choose a strategy for scheduling the privacy budget adds a hyperparameter for the training process. In particular, this hyperparameter is not needed if we do public pretraining instead. It would be useful for practitioners to give guidelines on how to schedule the privacy budget across training rounds such that we improve over a fixed noise multiplier, while minimally increasing the number additional hyperparameters to tune.

\bibliographystyle{alpha}
\bibliography{ref}
\newpage

\appendix 
\section{Notation Reference}
\label{app:notation}
\begin{table}[ht]
\begin{center}
\begin{tabular}{|c||c|c||c|c|}
\hline
 \textbf{Notation} & \textbf{Meaning} \\
\hline
$\calA$ & algorithm, i.e. a randomized map from datasets to outputs \\ \hline
$B_p(c, r)$ & $p$-dimensional ball of radius $r$ centered at $c$ \\ \hline
$\calC$ & constraint set \\ \hline
$D$ & data set ($\in \calD^*$)\\ \hline
$d$ & (singular) data point ($\in \calD$)\\ \hline
$\calD$ & data domain \\ \hline
$\epsilon, \delta$ & privacy parameters \\ \hline
$\eta$ & step size in gradient descent \\ \hline
$\ell$ & (per-example) loss function \\ \hline
$\calL$ & population loss function \\ \hline
$n$ & dataset size \\ \hline
$N(\mu, \Sigma)$ & normal distribution with mean $\mu$ and covariance matrix $\Sigma$ \\ \hline
$p$ & dimension \\ \hline
$\Pi$ & projection operator \\ \hline
$q$ & the ``activation function'' in Section~\ref{sec:construct} \\ \hline
$R, r$ & radii of various sets in our construction \\ \hline
$\tau$ & data (population) distribution \\ \hline
$\calT$ & set of data distributions \\ \hline
$\theta$ & model \\ \hline
$\theta^*(\tau)$ & population minimizer on the distribution $\tau$ \\ \hline
\end{tabular}
\end{center}
\caption{Summary of notation}
\label{tb:notation}
\end{table}

In Table~\ref{tb:notation}, we give a summary of the notation used throughout the paper.

\section{Survey of the gain of pretraining} 
\label{app:survey}

The stark difference in the gain of public pretraining between private and non-private model training has been widely observed in several tasks both in natural language and vision.

\medskip 
\noindent
{\bf Table-To-Text Generation.}
The experiment details can be found in \citep{li2021large}. 

\begin{table}[h]
  \begin{center}
        \begin{tabular}{|c| r r r| r r r |} 
        \hline
   & \multicolumn{3}{|c|}{BLEU}   & \multicolumn{3}{|c|}{ROUGE-L}\\ \hline 
   $\eps$ & $\infty$ & $8$ & 
   $3$ & $\infty$ & $8$ & 
   $3$ \\ 
 \hline 
 with public pretrain & 69.46 & 63.19 & 61.52 & 71.36 & 66.43 & 65.67 \\
 without public pretrain & 65.73 & 24.25& 15.46 & 68.75 & 39.95 & 35.24 \\ 
 \hline 
  gain of public pretraining & 3.73 & 38.94 & 46.06 & 2.61 & 26.48 & 30.43 \\
 \hline
    \end{tabular}
    \caption{BLEU and ROUGE-L scores for generating natural language descriptions of table entries on E2E dataset \citep{novikova2017e2e} reported in \citep[Table 2]{li2021large} with $\delta=10^{-5}$. The pretrained model in the first row is GPT-2.  }
    \label{tab:e2e}
    \end{center}
\end{table}

\begin{table}[h]
  \begin{center}
        \begin{tabular}{|c| r r r| r r r |} 
        \hline
   & \multicolumn{3}{|c|}{BLEU}   & \multicolumn{3}{|c|}{ROUGE-L}\\ \hline 
   $\eps$ & $\infty$ & $8$ & 
   $3$ & $\infty$ & $8$ & 
   $3$ \\ 
 \hline 
 with public pretrain & 42.78 & 35.06 & 31.03 & 56.72 & 54.58 & 52.06 \\
 without public pretrain & 26.79 & 7.77 & 3.00 & 37.86  & 21.68 & 17.14  \\ 
 \hline 
  gain of public pretraining & 15.99  & 27.29  &28.03  & 18.86   &  32.90  &  34.92
  \\
 \hline
    \end{tabular}
    \caption{BLEU and ROUGE-L scores for generating natural language descriptions of table entries on DART dataset \citep{nan2020dart} reported in \citep[Table 8]{li2021large} with $\delta=10^{-5}$. The pretrained model in the first row is GPT-2.  }
    \label{tab:dart}
    \end{center}
\end{table}

\medskip\noindent 
{\bf Image classification.} The experimental details can be found in \citep{de2022unlocking}.

\begin{table}[h]
  \begin{center}
        \begin{tabular}{|c| r r r r| r r r r|} 
        \hline
   & \multicolumn{4}{|c|}{CIFAR-10}   & \multicolumn{4}{|c|}{ImageNet}\\ \hline 
   $\eps$ & $8$ & $4$ & 
   $2$ & $1$ & $8$ & 
   $4$ & $2$ & $1$ \\ 
 \hline 
 with public pretrain &  96.7&96.1&95.4&94.7&81.8&79.2&74.7&70.3\\
 without public pretrain & 81.4 & 73.5& 65.9 & 56.8 & 32.4 & $-$&$-$&$-$  \\ 
 \hline 
  gain of public pretraining & 15.3 & 22.6 & 29.5  & 37.9 & 49.4 & $-$&$-$&$-$
  \\
 \hline
    \end{tabular}
    \caption{Test accuracy for image classification on CIFAR-10 and ImageNet datasets reported in \citep[Table 1]{de2022unlocking} with $\delta=10^{-5}$ and $8\cdot 10^{-7}$, respectively. The pretraining public data for CIFAR-10 is ImageNet and for ImageNet is JFT-4B. Without pretraining, private training on ImageNet failed to converge, indicated by $-$.}
    \label{tab:cv}
    \end{center}
\end{table}

\section{Missing Details from Section~\ref{sec:construct}} 

Before turning to the proof, we fill in the details of the construction given in Section~\ref{sec:construct}:
We choose $R_1 = 1/p^2 + \kappa \log(p)/\sqrt{p}$, where $\kappa$ is a sufficiently large constant. Any $R_2 < M - R_1$ suffices for our proof. We choose $r = O(\frac{1}{p^{5/2} \sqrt{\log(1/\delta)}})$. We formally define the range of data distributions we use in our construction as a set of products of two distributions: $\calT := \{\tau_1 \times \tau_2 | \tau_1 \in \calT_1, \tau_2 \in \calT_2'\}$, where $\calT_1$ is defined as in Lemma~\ref{lem:privatesclb}, and $\calT_2$ is defined as in Section~\ref{sec:construct}.

\subsection{Proof of Theorem~\ref{thm:quadratic-unconstrained}}\label{app:wells}

In our proof we will use DP-SGD as instantiated in \cite{BST14}. Combined with results on uniform stability of gradient descent on strongly convex losses (see e.g. \cite{HardtRS16}), Theorem 2.4 of \cite{BST14} and its proof implies the following:

\begin{theorem}\label{thm:sc-dp-sgd}
Suppose $\ell$ has Hessian $m\mathbb{I}_p$ and for any $d, d'$, $\ltwo{\nabla \ell(\theta; d) - \nabla \ell(\theta; d')} \leq L$. Then for $T = n^2$, $\eta_t = \frac{1}{mt}$, $\sigma^2 = O(\frac{L^2 \log(1/\delta)}{\epsilon^2 n^2})$, if $\ltwo{\theta_0 - \theta^*} \leq O(L / m)$ running $T$ steps of DP-SGD with step size $\eta_t$ in iteration $t$ and variance $\sigma^2$ is $(\epsilon, \delta)$-DP and achieves population loss: 

\[O\left(\frac{L^2 p \log(n) \log (1/\delta)}{m \epsilon^2 n^2} + \frac{L^2}{m n}\right).\]
\end{theorem}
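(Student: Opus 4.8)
The plan is to invoke the analysis of DP-SGD from \cite{BST14} as a black box for the empirical risk bound, and then convert that empirical bound into a population bound via uniform stability, exactly as advertised in the theorem statement. First I would normalize the problem: since $\ell$ has Hessian $m\mathbb{I}_p$ it is $m$-strongly convex and $m$-smooth, and the gradient sensitivity $\ltwo{\nabla\ell(\theta;d)-\nabla\ell(\theta;d')}\le L$ means that replacing one sample changes the full-batch gradient by at most $L/n$, so the per-step Gaussian noise calibrated to $(\epsilon,\delta)$-DP after $T=n^2$ iterations (via the moments accountant / strong composition of \cite{BST14}) has variance $\sigma^2 = O\!\left(\frac{L^2\log(1/\delta)}{\epsilon^2 n^2}\right)$ as stated. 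With the decreasing step size $\eta_t=\frac{1}{mt}$ tuned to strong convexity, Theorem~2.4 of \cite{BST14} gives the empirical excess risk bound: the ``optimization'' term from strongly convex SGD contributes $O\!\left(\frac{L^2}{mT}\right)=O\!\left(\frac{L^2}{mn^2}\right)$, and the ``privacy noise'' term contributes $O\!\left(\frac{\sigma^2 p \log(n)}{m}\right)$ accumulated over the iterations with the $1/(mt)$ schedule, which is $O\!\left(\frac{L^2 p\log(n)\log(1/\delta)}{m\epsilon^2 n^2}\right)$. The initialization hypothesis $\ltwo{\theta_0-\theta^*}\le O(L/m)$ is exactly what is needed so that the initial distance term is absorbed into (i.e.\ dominated by) the other terms after the standard strongly-convex SGD contraction.

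Next I would handle the passage from empirical to population loss. Since $\ell$ is $m$-strongly convex with bounded gradient sensitivity, gradient descent (and noisy gradient descent) on the empirical objective is uniformly stable with stability parameter $O\!\left(\frac{L^2}{mn}\right)$ — this is the content of \cite{HardtRS16} applied to strongly convex, smooth losses, and crucially the Gaussian noise injected in DP-SGD is data-independent so it does not affect the stability argument (one couples the two runs on neighboring datasets with the same noise draws). Uniform stability $\gamma$ implies that the expected generalization gap $\mathbb{E}[\calL(\theta)] - \mathbb{E}[\ell(\theta;D)]$ is at most $\gamma = O\!\left(\frac{L^2}{mn}\right)$. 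Adding this to the empirical excess risk bound from the previous step, and noting that the empirical minimizer's loss is at most $\calL(\theta^*)$ in expectation (since $\mathbb{E}_D[\ell(\theta^*;D)]=\calL(\theta^*)$), yields
\[
\mathbb{E}[\calL(\theta)] - \calL(\theta^*) \le O\!\left(\frac{L^2 p \log(n)\log(1/\delta)}{m\epsilon^2 n^2}\right) + O\!\left(\frac{L^2}{mn^2}\right) + O\!\left(\frac{L^2}{mn}\right),
\]
and the middle term is dominated by the last, giving the claimed bound $O\!\left(\frac{L^2 p\log(n)\log(1/\delta)}{m\epsilon^2 n^2} + \frac{L^2}{mn}\right)$.

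The main obstacle is not any single deep step but rather carefully matching the hypotheses: \cite{BST14}'s Theorem~2.4 as originally stated is phrased for Lipschitz convex losses over a bounded constraint set, whereas here I want it for an \emph{unconstrained}, strongly convex loss with a gradient-sensitivity (rather than gradient-norm) bound and a specific initialization radius. I would need to trace through its proof to confirm that (i) strong convexity with the $\eta_t = 1/(mt)$ schedule replaces the projection-based argument and yields the improved $1/n^2$ rate with the $p\log n$ privacy term, and (ii) the initialization condition $\ltwo{\theta_0-\theta^*} = O(L/m)$ is exactly what keeps the iterates in a region where the Lipschitz/sensitivity bound $L$ is the relevant quantity, so that no constraint set is needed. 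The other subtlety to check is that uniform stability of \emph{noisy} GD matches that of noiseless GD — this is standard but worth stating explicitly, since the noise is added obliviously to the data. Once these bookkeeping points are verified, the result follows by combining the two cited bounds as above.
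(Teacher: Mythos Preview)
Your proposal is correct and follows exactly the approach the paper takes: the paper does not give a self-contained proof but simply states that the result follows from Theorem~2.4 of \cite{BST14} combined with the uniform stability bounds for strongly convex losses from \cite{HardtRS16}, which is precisely the two-step argument (empirical risk via \cite{BST14}, then generalization via stability) you outline. Your additional remarks about the bookkeeping needed to adapt \cite{BST14}'s statement to the unconstrained, gradient-sensitivity setting and about noise-oblivious coupling for stability are appropriate caveats that the paper leaves implicit.
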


We also have the following lemma, which effectively says that unconstrained and DP-SGD stays within a ball with high probability.

\begin{lem}\label{lem:dpsgd-stays-in-ball}
With probability $1 - T 2^{-\Omega(p)}$ over (unconstrained) DP-SGD using the parameters in Theorem~\ref{thm:sc-dp-sgd}, for all $0 \leq t \leq T$, we have $\ltwo{\theta_t - \theta^*} \leq \max\{2\sqrt{p}\sigma/m, \ltwo{\theta_0 - \theta^*}\}$.
\end{lem}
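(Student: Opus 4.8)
The plan is to unroll the DP-SGD recursion and exploit that, under the harmonic step sizes $\eta_t = 1/(mt)$ from Theorem~\ref{thm:sc-dp-sgd}, each step is a genuine contraction toward the minimizer plus a perturbation whose magnitude decays at exactly the matching rate. Since $\ell$ has Hessian $m\mathbb{I}_p$, its gradient is the affine map $\nabla\ell(\theta;D) = m(\theta - \theta^*)$, so recentering at $\theta^*$ and writing the step-$t$ update ($t = 1,\dots,T$, producing $\theta_t$ from $\theta_{t-1}$) gives $\theta_t - \theta^* = (1 - \eta_t m)(\theta_{t-1} - \theta^*) + \xi_t = (1 - 1/t)(\theta_{t-1} - \theta^*) + \xi_t$, where the contraction factor lies in $[0,1)$ and, in the \cite{BST14} instantiation, $\xi_t$ is Gaussian with per-coordinate standard deviation $\eta_t\sigma = \sigma/(mt)$. (If the instantiation uses a single sampled example per step rather than the full gradient, an extra term $\eta_t m\,(u_{d_{j_t}} - \theta^*)$ of norm $O(\eta_t L)$ appears, where $u_d$ is the per-example offset; taking $\theta^*$ to be the empirical minimizer makes it mean-zero, and its accumulated contribution is a bounded martingale of norm $O(\sqrt{p}\,L/(m\sqrt{T}))$ by the vector Azuma inequality~\cite{hayes03vectorazuma}, which is dominated by $\sqrt{p}\sigma/m$ for the $\sigma$ of Theorem~\ref{thm:sc-dp-sgd}; I would fold this into constants and otherwise ignore it.)

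Next I would set up the high-probability event. Define $E$ to be the event that $\ltwo{\xi_t} \le 2\sqrt{p}\,\eta_t\sigma$ for every $1 \le t \le T$. Each $\ltwo{\xi_t}^2/(\eta_t\sigma)^2$ is a $\chi^2_p$ variable, so by a standard $\chi^2$ tail bound $\Pr[\ltwo{\xi_t} > 2\sqrt{p}\,\eta_t\sigma] = \Pr[\chi^2_p > 4p] \le 2^{-\Omega(p)}$; a union bound over the $T$ steps gives $\Pr[E] \ge 1 - T\,2^{-\Omega(p)}$, which is exactly the probability claimed in the lemma. Importantly, $E$ is just a conjunction of norm bounds on the injected noise, so conditioning on it does not otherwise distort the recursion.

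It then remains to prove, on $E$, that $\ltwo{\theta_t - \theta^*} \le R := \max\{2\sqrt{p}\sigma/m,\ \ltwo{\theta_0 - \theta^*}\}$ for all $0 \le t \le T$, which I would do by induction on $t$. The base case $t = 0$ is the definition of $R$. For the inductive step, the contraction identity and the triangle inequality give $\ltwo{\theta_t - \theta^*} \le (1 - 1/t)\,\ltwo{\theta_{t-1} - \theta^*} + \ltwo{\xi_t} \le (1 - 1/t) R + 2\sqrt{p}\,\eta_t\sigma = (1 - 1/t) R + \tfrac{2\sqrt{p}\sigma}{m t} \le (1 - 1/t) R + \tfrac{R}{t} = R$, where the last inequality uses $2\sqrt{p}\sigma/m \le R$. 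This closes the induction and proves the lemma, with the universal quantifier over $0 \le t \le T$ already subsumed by the single event $E$ rather than requiring a further union bound.

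\textbf{Main obstacle.} The only nontrivial point is the matching of rates: a crude worst-case accumulation of the noise across $T = n^2$ steps would inflate the radius by a $\mathrm{poly}(n)$ factor, and what rescues the $n$-independent radius $2\sqrt{p}\sigma/m$ is the exact cancellation between the contraction factor $1 - 1/t$ and the $1/t$ decay of the step-$t$ noise scale $\eta_t\sigma = \sigma/(mt)$, which is precisely what makes the induction self-sustaining. A secondary but necessary check is that $E$ is measurable as a function of the noise alone (so the induction is not circular), and, in the single-example-sampling variant, that the sampling bias term is both mean-zero — forcing $\theta^*$ to be the empirical rather than population minimizer — and small enough to absorb into constants via a martingale tail bound.
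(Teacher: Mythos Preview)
Your proposal is correct and follows essentially the same approach as the paper: a union bound over $T$ Gaussian tail events to get $\ltwo{\xi_t}\le 2\sqrt{p}\,\eta_t\sigma$ for all $t$, followed by the one-line induction $\ltwo{\theta_t-\theta^*}\le(1-m\eta_t)R+2\sqrt{p}\,\eta_t\sigma\le R$. The paper's version is terser (it just writes the recursive inequality and says ``by induction''), and it does not bother with your aside about the single-sample variant, but the argument is the same.
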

\begin{proof}
By a multivariate Gaussian tail bound, w.p. $1 - T 2^{-\Omega(p)}$ in each iteration of DP-SGD the noise we add has $\ell_2$-norm at most $2\sqrt{p} \eta_t \sigma$. Conditioned on this event, since we have an identity quadratic loss and $\eta_t \leq 1/m$ for all $t$, each step of gradient descent is $(1-m \eta_t)$ contractive. So we have:

\[\forall t: \ltwo{\theta_t - \theta^*} \leq (1 - m \eta_t) \ltwo{\theta_{t-1} - \theta^*} + 2 \sqrt{p}\eta_t \sigma.\]

The lemma follows by induction.
\end{proof}

\begin{proof}[Proof of Theorem~\ref{thm:quadratic-unconstrained}]
We use $S$, $\ell$ as defined in Section~\ref{sec:construct}, (and the associated definitions of $\calD, \calT, q$, etc.).

We will prove (1) in two parts. First, we will show that for any $\calA_{priv}$, there exists $\tau_1 \in \calT_1$ such that the desired lower bound holds for $\tau_1 \times \tau_2$ for all $\tau_2 \in \calT_2'$. Second, we show that for any $\calA_{pub}$  there exists $\tau_2 \in \calT_2'$ such that the desired lower bound holds for $\tau_1 \times \tau_2$ for all $\tau_1 \in \calT_1$. Then taking $\tau_1$ from the first statement and $\tau_2$ from the second statement, both lower bounds hold for $\tau_1 \times \tau_2$ as desired.

\mypar{Proof of (1) for $\calA_{priv}$} 
Let $\tau_2$ be an arbitrary, fixed member of $\calT_2'$. Fix any $\calA_{priv}$ and take any distribution $\tau_1 \in \calT_1$. Let $\tau(\tau_1) \in \calT$ be the distribution over $(d_1, d_2)$ given by sampling $d_1 \sim \tau_1$ and $d_2 \sim \tau_2$. Let $\calL$ refer to the population loss over $\ell$, and let $\calL_1$ refer to the population loss over $\ell_1(\theta_1, d_1)$ for $d_1 \sim \tau_1$.  Consider the following algorithm $\calA_{priv}'$ for minimizing $\ell_1$ given $p^2$ samples from $\tau_1$: For each of these samples $d_1$, $\calA_{priv}'$ draws an i.i.d. sample $d_2$ from $\tau_2$ and pads $d_1$ with $d_2$, giving $p^2$ i.i.d samples from $\tau(\tau_1)$. $\calA_{priv}'$ then runs $\calA_{priv}$ on these samples, and takes $\theta_1$ from the output of $\calA_{priv}$. Note that $\calA_{priv}'$ is allowed to know the distribution $\tau_2$ since it is fixed and independent of the data $\calA_{priv}$ receives. We observe a few facts about $\calL$. First:

\begin{equation}\label{eq:theta2equivalence}
\calL((\theta_1, \theta_2)) - \calL((\theta_1, \theta_2')) = q(\theta_1)(\calL_2(\theta_2) - \calL_2(\theta'_2)).
\end{equation}

Since $q$ is non-negative, this gives:
\begin{equation}\label{eq:theta2monotonicity}
    \calL_2(\theta_2) \leq \calL_2(\theta_2') \leftrightarrow \calL((\theta_1, \theta_2)) \leq \calL((\theta_1, \theta_2'))
\end{equation}
This implies that replacing $\theta_2$ with the population minimizer of $\calL_2$ can only improve our risk on $\ell$. Next, note that for any $\tau_1 \in \calT_1$, since its population minimizer is in $S$, $q(\theta^*(\tau_1)) = 1$. In turn, $\theta^*(\tau_1 \times \tau_2) = (\theta^*(\tau_1), \theta^*(\tau_2))$ for all $\tau_1 \in \calT_1$. Finally, since $q$ is in $[0, 1]$ and $\calL_2$ is non-positive this gives:

\begin{equation}\label{eq:qatopt}
    \calL((\theta_1, \theta^*(\tau_2))) - \calL(\theta^*(\tau(\tau_1))) = \calL_1(\theta_1) - \calL_1(\theta^*(\tau_1)) - (1 - q(\theta_1)) \cdot \calL_2(\theta^*(\tau_2)) \geq \calL_1(\theta_1) - \calL_1(\theta^*(\tau_1)).
\end{equation}

In other words, if we choose $\theta_2$ to be the minimizer of $\calL_2$, then our risk on $\calL$ is at least our risk on $\calL_1$ alone. Putting it all together:

\[\Expect{D \sim \tau(\tau_1)^{p^2}}{\Expect{\theta \sim \calA_{priv}(D)}{\calL(\theta)} - \calL(\theta^*(\tau(\tau_1)))}\]
\[\stackrel{\eqref{eq:theta2monotonicity}}{\geq} \Expect{D \sim \tau(\tau_1)^{p^2}}{\Expect{(\theta_1, \theta_2) \sim \calA_{priv}(D)}{\calL((\theta_1, \theta^*(\tau_2)))} - \calL(\theta^*(\tau(\tau_1)))}\]
\[\stackrel{\eqref{eq:qatopt}}{\geq} \Expect{D \sim \tau_1^{p^2}}{\Expect{\theta_1 \sim \calA_{priv}'(D)}{\calL_1(\theta_1)} - \calL_1(\theta^*(\tau_1))}.\]

Using Lemma~\ref{lem:privatesclb}, since we are solving a $p^4$-dimensional mean estimation problem with $p^2$ samples and $(1, o(1/n)$-DP, we know that the final expression (the risk of $\calA_{priv}'$) is $\Omega(1)$ for some $\tau_1 \in \calT_1$, which implies the same lower bound on the risk of $\calA_{priv}$ for $\tau_1 \times \tau_2$.

\mypar{Proof of (1) for $\calA_{pub}$} Fix an arbitrary $\tau_1 \in \calT_1$. Let $\tau(\tau_2)$ denote $\tau_1 \times \tau_2$. Given any $\calA_{pub}$, consider $\calA_{pub}'$ that takes $p$ samples from $\tau_2$, pads them with i.i.d. samples from $\tau_1$ to get $p$ samples from $\tau(\tau_2)$. It then runs $\calA_{pub}$ on these samples, clips the norm of the $\theta_2$ in $\calA_{pub}$'s output to be at most $r$, and uses this as its output. 

We again make some observations on $\ell$. First, since the population minimizer $\theta^*(\tau_1)$ is always in $S$ by definition, we have $q(\theta^*(\tau_1)) = 1$. Then, since $\ell_2$ is non-positive:
\begin{equation}\label{eq:l1-population-minimizer}
 \calL(\theta_1, \theta_2) \geq  \calL(\theta^*(\tau_1), \theta_2)
\end{equation}

Next, by definition of $\ell_2$ and non-expansiveness of Euclidean projection, clipping $\theta_2$ to a ball of radius $r$ can only decrease the loss, i.e., if we define $\textsc{clip}(\theta_2, r) := \frac{\theta_2}{\max\{1, \ltwo{\theta_2}/r\}}$:
\begin{equation}\label{eq:l2-clipping}
 \calL_2\left(\textsc{clip}(\theta_2, r)\right) \leq \calL_2(\theta_2).
\end{equation}

Then we have:

\[\Expect{D \sim \tau(\tau_2)^p}{\Expect{\theta \sim \calA_{pub}(D)}{\calL(\theta)} - \calL(\theta^*(\tau(\tau_2)))}\]
\[\stackrel{\eqref{eq:l1-population-minimizer}}{\geq} \Expect{D \sim \tau(\tau_2)^p}{\Expect{(\theta_1, \theta_2) \sim \calA_{pub}(D)}{\calL((\theta^*(\tau_1), \theta_2))} - \calL(\theta^*(\tau(\tau_2)))}\]
\[\stackrel{\eqref{eq:theta2equivalence}}{=}\Expect{D \sim \tau(\tau_2)^p}{\Expect{(\theta_1, \theta_2) \sim \calA_{pub}(D)}{\calL_2(\theta_2)} - \calL_2(\theta^*(\tau_2))}\]
\[\stackrel{\eqref{eq:l2-clipping}}{\geq} \Expect{D \sim \tau_2^p}{\Expect{\theta_2 \sim \calA_{pub}'(D)}{\calL_2(\theta_2)} - \calL_2(\theta^*(\tau_2))}.\]

Since $\theta_2$ in the last line has norm at most $r$, the last expression (the risk of $\calA_{pub}'$ on $\ell_2$) is equal to $p$ times the risk of $\calA_{pub}'$ on a rescaling of the mean-estimation problem in Lemma~\ref{lem:publicsclb}, which is $\Omega(1)$ for some $\tau_2 \in \calT_2'$. This implies the same lower bound on the risk of $\calA_{pub}$ for $\tau_1 \times \tau_2$.

\mypar{Proof of (2)} 
We initialize $\theta_1 = \theta_2 = 0$ for simplicity\footnote{As long as $q(\theta_1) = 0$ and $\ltwo{\theta_2} \leq r$ initially with high probability, the proof still goes through.}. Then, note that the term $p \cdot q(\theta_1) \cdot \ell_2(\theta_2)$ is a constant in the region where $q(\theta_1) = 0$, which includes the origin. So, the gradient of this term is 0 at the origin, and a single step of gradient descent on the public data sets $\theta_1$ to the empirical minimizer of $\ell_1$ (which achieves risk $O(1/p)$ on $\ell_1$ alone in expectation), and does not affect $\theta_2$. We will then run DP-SGD from this point.

By a vector Azuma inequality, with probability at least $1 - p^{-\Omega(\kappa)}$, $\theta_1 \in S$ and is distance at least $\Omega(\frac{\log p}{\sqrt{p}})$ from the boundary of $S$. 
In the $p^{-\Omega(\kappa)}$ probability event this does not happen, since both $\ell_1$ and $\ell_2$ take on values in an interval of length $O(p)$, our risk is at most $O(p)$, and so for sufficiently large constant $\kappa$ the contribution of this event to our expected risk is negligible. So we just need to show our overall risk is $O(1/p)$ in expectation when after a single step of gradient descent on the public data, $\theta_1 \in S$ and is distance $\Omega(\frac{\log p}{\sqrt{p}})$ from the boundary. 

Note that as long as $\theta_1 \in S$, $q(\theta_1) = 1$ and thus the Lipschitzness of $\ell$ with respect to $\theta_1$ is $O(1)$. We will argue that if $r$ is sufficiently small, then $\theta_1$ does not move by more than $O(1/p)$ with probability $1-p^{-\Omega(1)}$ (and as before, if this high probability event does not occur, the contribution to the risk is negligible). As long as $\theta_1$ does not move by more than $O(1/p)$ while we run DP-SGD, it will remain in $S$ since we assume at the start of DP-SGD, $\theta_1$ is distance $\Omega(\frac{\log p}{\sqrt{p}})$ from the boundary of $S$. Putting it all together, this implies that (i) the excess loss on $\ell_1$ does not increase by more than $O(1/p)$, since $\ell_1$ is $O(1)$-Lipschitz with respect to $\theta_1$, and (ii) since $\theta_1$ stays within $S$ and thus $q(\theta_1) = 1$, the change in $\theta_2$ is the same as the change if we ran DP-SGD on $\ell_2(\theta_2)$ alone. Lemma~\ref{lem:dpsgd-stays-in-ball} implies that $\theta_2$ stays within $B_p(0, 2r)$, and thus DP-SGD on $\ell(\theta_2)$ is the same as running DP-SGD on the purely quadratic loss $\frac{p}{2r^2} \ltwo{\theta_2 - d_2}^2$, with high probability. So by Theorem~\ref{thm:sc-dp-sgd}, DP-SGD with optimal parameters will give $\theta_2$ achieving risk $\tilde{O}(1/p^2)$ on $\ell_2$ alone. This gives an overall risk bound of $\tilde{O}(1/p)$, completing the proof.

Now, the idea is that in DP-SGD as in Theorem~\ref{thm:sc-dp-sgd}, the total movement of $\theta_1$ due to both gradient steps and noise is an increasing function of $r$, so we can set $r$ to be sufficiently small to guarantee $\theta_1$ does not move by more than $O(1/p)$.
Specifically, as long as $\theta_1 \in S$ we have $\ell = \ell_1 + \ell_2$, and so the loss $\ell$ satisfies $\ltwo{\nabla \ell(\theta; d) - \nabla \ell(\theta; d')} = O(p/r)$ for all $d, d'$ (the gradient difference bound on $\ell_2$) as long as $\theta_1 \in S$. We use the optimal setting of parameters in DP-SGD corresponding to $L = O(\frac{p}{r}), m = \frac{p}{r^2}$, noting that the initialization condition in Theorem~\ref{thm:sc-dp-sgd} is satisfied by $\theta_2 = 0$. If we plug these into the parameter settings in Theorem~\ref{thm:sc-dp-sgd}, we get that we should use $T = \Theta(p^2)$ iterations with step-size $\eta_t = \frac{r^2}{p t}$ and per-iteration variance $\sigma = \Theta(\frac{\sqrt{\log(1/\delta)}}{r^2 })$, and achieves risk $\tilde{O}(1/p)$ on $\ell_2$ alone. Then, the movement of $\theta_1$ due to unnoised gradients in DP-SGD is at most $O(1) \cdot \sum_t \eta_t = \Theta(r^2 p \log p)$ (here we use the fact that the Lipschitz constant with respect to $\theta_1$ is $O(1)$ within $S$, not $O(p/r)$), and with high probability the movement due to the noise is $O(\sqrt{p^4} \sqrt{\sum_t \eta_t^2} \sigma) = O(p^{3/2} r \sqrt{\log(1/\delta)})$. So if $r = O(\frac{1}{p^{5/2} \sqrt{\log(1/\delta)}})$, we get the desired upper bound on the movement of $\theta_1$ during DP-SGD.
\end{proof}

\begin{proof}[Proof of Theorem~\ref{thm:quadratic-mismatch}]
The proof is almost exactly the same as Theorem~\ref{thm:quadratic-unconstrained}, so we only highlight the changes to that proof.

We define $\calT$ similarly to  Theorem~\ref{thm:quadratic-unconstrained}: For  each $\tau = \tau_1 \times \tau_2$ in $\calT$ as defined in Theorem~\ref{thm:quadratic-unconstrained}, we replace it with $(\tau_{pub} = \tau_1 \times Z, \tau_{pub} = \tau_1 \times \tau_2)$ where $Z$ is a point distribution on the origin.

Now, the lower bound in on $\calA_{priv}$ in (1) can be proven exactly the same as in Theorem~\ref{thm:quadratic-unconstrained} since we're using the same set of private distributions. The lower bound on $\calA_{pub}$ follows similarly to Theorem~\ref{thm:quadratic-unconstrained}, since we proved it holds for $\tau_1 \times \tau_2$ where $\tau_1$ can be arbitrary. Alternatively, one can note that $\calA_{pub}$ learns no information about $\tau_2$ from the public data, so it cannot do better on $\ell_2$ than outputting a fixed point, which has risk $\Omega(1)$

The upper bound in (2) follows since the algorithm only evaluates gradients on the public data where $q(\theta_1) = 0$, i.e. it never uses the coordinates in the public data that are changed between this theorem and Theorem~\ref{thm:quadratic-unconstrained}.
\end{proof}
\section{Quadratic Example}\label{app:quadratic}

In this section, we show that our construction holds even if the loss function is quadratic, as long as we are okay with using a constrained optimization problem.

\begin{thm}\label{thm:quadratic}
For every integer $p \geq 1$, for $\calC = \calD = B_{p^4}(0, 1) \times B_{p}(0, 1)$ there exists $\ell$ such that:
\begin{enumerate}
    \item[(1)] For $\delta = o(1/p^2)$, any (non-private) algorithm $\calA_{pub} : \calD^p \rightarrow \calC$, and any $(1, \delta)$-DP algorithm $\calA_{priv} : \calD^{p^2} \rightarrow \calC$ there exists $\tau$ such that:
    \[\Expect{D_{priv} \sim \tau^{p^2}}{\Expect{\theta \sim \calA_{priv}(D_{priv})}{\calL(\theta)} - \calL(\theta^*(\tau(\calA_{priv})))} = \Omega(1)\]
    \[\Expect{D_{pub} \sim \tau^p}{\Expect{\theta \sim \calA_{pub}(D_{pub})}{\calL(\theta)} - \calL(\theta^*(\tau(\calA_{pub})))} = \Omega(1)\]
    \item[(2)] For any $\delta \geq 2^{-p}$, there exists an algorithm $\calA_{mixed} : \calD^{p+p^2} \rightarrow \calC$ which runs projected gradient descent on the first $p$ examples, followed by $(1, \delta)$-DP-SGD on the last $p^2$ examples, such that for any $\tau$:
    \[\Expect{D \sim \tau^{p+p^2}}{\Expect{\theta \sim \calA_{mixed}(D)}{\calL(\theta)} - \calL(\theta^*(\tau))} = O(1/p)\]
\end{enumerate}
\end{thm}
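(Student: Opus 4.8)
\medskip\noindent\textbf{Overall plan.}
The plan is to reuse the ``concatenate two mean-estimation hard instances'' strategy of Theorem~\ref{thm:quadratic-unconstrained}, replacing the non-convex activation $q(\theta_1)$ and the truncations $\min\{\cdot,\tfrac92\}$ by the hard constraint $\calC=B_{p^4}(0,1)\times B_p(0,1)$. I would take the \emph{separable, non-isotropic quadratic}
\[
\ell\big((\theta_1,\theta_2);(d_1,d_2)\big)\;=\;\tfrac12\ltwo{\theta_1-d_1}^2\;+\;\tfrac{p}{2}\ltwo{\theta_2-d_2}^2,
\]
whose first block (in $\mathbb{R}^{p^4}$) carries the private hardness of Lemma~\ref{lem:privatesclb} with dimension $p^4$, $n=p^2$, $\epsilon=1$ (so the hypothesis $p^4\le\epsilon^2 n^2$ holds with equality and the excess risk is $\Omega(p^4/p^4+1/p^2)=\Omega(1)$), and whose second block (in $\mathbb{R}^p$) carries the public hardness of Lemma~\ref{lem:publicsclb} with $n=p$, scaled by $p$ so its $\Omega(1/p)$ lower bound becomes $\Omega(1)$. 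Using the constraint rather than $q$ means I no longer need phase-1 gradient descent to avoid $\theta_2$: since $\theta_2$ stays in $B_p(0,1)$, phase~2 can simply re-optimize it from scratch. One technical point throughout is to choose the constants in the two lemmas' hard instances small enough that their population minimizers lie strictly inside the unit balls, so constraining to $\calC$ leaves the minimizers and hence the lower bounds intact; since $\calD$ is itself a product of unit balls, this holds automatically up to a universal constant.

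\medskip\noindent\textbf{Part (1).} Because $\ell$ is separable, for a product distribution $\tau_1\times\tau_2$ the population loss splits as $\calL(\theta_1,\theta_2)=\calL_1(\theta_1)+p\calL_2(\theta_2)$, is minimized blockwise, and so the excess $\ell$-risk of any output equals the sum of its excess risks on $\ell_1$ and on $p\cdot\ell_2$ --- in particular it dominates each. Given $\calA_{priv}$, I would pad its $p^2$ private $\ell_1$-inputs with fresh i.i.d.\ draws of $d_2$ from a fixed $\tau_2$, run $\calA_{priv}$, and read off $\theta_1$: this is a $(1,o(1/p^2))$-DP mean estimator in dimension $p^4$ from $p^2$ samples, so Lemma~\ref{lem:privatesclb} yields a $\tau_1$ on which $\calA_{priv}$'s excess $\ell$-risk is $\Omega(1)$. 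Symmetrically, given $\calA_{pub}$ I would pad the $d_1$-coordinates of its $p$ public inputs, run it, and read off $\theta_2$ --- which already lies in $B_p(0,1)$, so the clipping step \eqref{eq:l2-clipping} is unnecessary --- and Lemma~\ref{lem:publicsclb} then forces excess risk $p\cdot\Omega(1/p)=\Omega(1)$. Taking the product of the two resulting distributions gives a single $\tau$ hard for both algorithms, exactly as in the proof of Theorem~\ref{thm:quadratic-unconstrained}.

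\medskip\noindent\textbf{Part (2).} Phase~1 runs projected gradient descent on the $p$ public examples for $\ell$; the (block-)quadratic objective over that batch is minimized at $(\bar d_1,\bar d_2)$, the blockwise public empirical means, which already lie inside their unit balls, so projected GD converges there. The empirical mean of $p$ i.i.d.\ points in a unit ball has expected squared distance $\le 1/p$ from the true mean, so after phase~1 the excess risk on $\ell_1$ is $O(1/p)$. Phase~2 runs $(1,\delta)$-DP-SGD on the $p^2$ private examples, updating only the $\theta_2$-coordinates (freezing $\theta_1$), i.e.\ on $\tfrac{p}{2}\ltwo{\theta_2-d_2}^2$: this has Hessian $p\mathbb{I}_p$ (so $m=p$), gradient-difference bound $L=O(p)$, and a phase-1 initialization in $B_p(0,1)$, hence within distance $2=O(L/m)$ of $\theta_2^*$. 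Theorem~\ref{thm:sc-dp-sgd} (with $T=p^4$, $\eta_t=1/(pt)$, $\sigma^2=O(\log(1/\delta)/p^2)$, using $\log(1/\delta)\le p$ since $\delta\ge2^{-p}$) then gives excess risk $\tilde O(1/p)$ on $p\cdot\ell_2$. By separability the total excess risk on $\ell$ is the sum, i.e.\ $O(1/p)$ up to logarithmic factors.

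\medskip\noindent\textbf{Main obstacle.} The delicate step --- and the reason the constraint is essential --- is that, lacking the $q$-activation, phase-1 GD on the honest quadratic genuinely perturbs $\theta_2$, and the noise that phase-2 DP-SGD must inject into the $\theta_2$-gradients (sensitivity $\Theta(p)$ because of the $p$-multiplier) is large enough that, if DP-SGD were run on all coordinates, it would displace $\theta_1$ by $\Theta(\sqrt p)$ --- far beyond the diameter of its domain --- wiping out the $O(1/p)$ risk on $\ell_1$ earned in phase~1. I would handle this by (i) restricting phase-2 DP-SGD to the $\theta_2$-block so $\theta_1$ never moves, and (ii) using the compactness of $\calC$ both to ensure the arbitrary point in $B_p(0,1)$ left by phase~1 meets the initialization hypothesis of Theorem~\ref{thm:sc-dp-sgd} and to keep the now-un-truncated quadratic loss bounded, eliminating the low-probability Gaussian tails that $\min\{\cdot,\tfrac92\}$ controlled in Section~\ref{sec:construct}. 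The remaining loose ends --- that restricting DP-SGD to a coordinate block preserves both its privacy accounting and the utility bound of Theorem~\ref{thm:sc-dp-sgd} (immediate, as the restricted problem is itself a strongly convex DP-SCO instance), and that the constrained problem really inherits the Lemma~\ref{lem:privatesclb}/\ref{lem:publicsclb} lower bounds (needing interior minimizers, as above) --- are routine.
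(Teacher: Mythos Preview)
Your lower-bound argument in Part~(1) is essentially the paper's: pad one block with dummies, read off the other, and invoke Lemmas~\ref{lem:privatesclb}/\ref{lem:publicsclb} on the separable risk. The difference is in Part~(2). You take $r=1$ in the second block and handle the ``$\theta_1$ drifts during DP-SGD'' issue by simply freezing $\theta_1$ in phase~2, running DP-SGD only on the $\theta_2$-coordinates. The paper instead keeps DP-SGD on \emph{all} coordinates but rescales the second block to radius $r=O\big(p^{-5/2}/\sqrt{\log(1/\delta)}\big)$ and uses the loss $\tfrac{p}{2r^2}\ltwo{\theta_2-d_2}^2$; with this scaling the optimal DP-SGD step sizes $\eta_t=r^2/(pt)$ and noise level are small enough that the cumulative movement of $\theta_1$ over the $p^4$ private iterations is $O(1/p)$ with high probability, so the phase-1 risk on $\ell_1$ is preserved without any coordinate freezing. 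Your route is cleaner and avoids the delicate movement-of-$\theta_1$ bookkeeping; the paper's route buys that $\calA_{mixed}$ is literally ``vanilla'' projected DP-SGD on the full parameter $\theta$, so there is no question of whether block-restricted DP-SGD counts as ``DP-SGD on the last $p^2$ examples'' in the theorem statement. Both are valid; the paper's version is closer in spirit to Theorem~\ref{thm:quadratic-unconstrained}, where the small-$r$ trick is what lets fine-tuning stay inside the basin.
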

\begin{proof}

We will first state $\ell$ and then prove each item in the theorem statement.
We use the following construction: $\calC = \calD = B_{p^4}(0, 1) \times B_{p}(0, r)$. Let $(\theta_1, \theta_2)$ denote an element of $\calC$, with $\theta_1 \in B_{p^4}(0, 1)$ and $\theta_2 \in B_{p}(0, r)$ for $r = O(\frac{1}{p^{5/2} \sqrt{\log(1/\delta)}})$, and similarly with $(d_1, d_2)$ and $\calD$. We let $\ell((\theta_1, \theta_2), (d_1, d_2)) = \frac{1}{2} \ltwo{\theta_1 - d_1}^2 + \frac{p}{2 r^2} \ltwo{\theta_2 - d_2}^2$.

As in the proof of Theorem~\ref{thm:quadratic-unconstrained}, we will show (1) in two parts: for any $\calA_{priv}$, there exists $\tau_1 \in \calT_1$ such that the desired lower bound holds for $\tau_1 \times \tau_2$ for all $\tau_2 \in \calT_2'$, and that for any $\calA_{pub}$  there exists $\tau_2 \in \calT_2'$ such that the desired lower bound holds for $\tau_1 \times \tau_2$ for all $\tau_1 \in \calT_1$. 

\mypar{Proof of (1) for $\calA_{priv}$}  Fix $\calA_{priv}$ and take any distribution $\tau_1$ over $B_{p^4}(0, 1)$. Let $\tau(\tau_1)$ be the distribution over $(d_1, d_2)$ given by sampling $d_1 \sim \tau_1$ and letting $d_2$ be the origin with probability 1. Let $\calL$ refer to the population loss over $\ell$, and let $\calL_1$ refer to the population loss over $\ell_1(\theta_1) := \ltwo{\theta_1 - d_1}^2, d_1 \sim \tau_1$. 

Now, consider an algorithm $\calA_{priv}'$ that takes $p^2$ samples from $\tau_1$, pads them with the origin to get $p^2$ samples from $\tau(\tau_1)$ in the preceding paragraph, runs $\calA_{priv}$ on these samples, and then takes $\theta_1$ from the output of $\calA_{priv}$. Notice that:

\[\Expect{D \sim \tau(\tau_1)^{p^2}}{\Expect{\theta \sim \calA_{priv}(D)}{\calL(\theta)} - \calL(\theta^*(\tau(\tau_1)))}\]
\[\geq \Expect{D \sim \tau(\tau_1)^{p^2}}{\Expect{(\theta_1, \theta_2) \sim \calA_{priv}(D)}{\calL((\theta_1, 0))} - \calL(\theta^*(\tau(\tau_1)))}\]
\[= \Expect{D \sim \tau_1^{p^2}}{\Expect{(\theta_1, \theta_2) \sim \calA_{priv}(D), d_1 \sim \tau_1}{\frac{1}{2}\ltwo{\theta_1 - d_1}^2} - \calL_1(\theta^*(\tau_1))}\]
\[= \Expect{D \sim \tau_1^{p^2}}{\Expect{\theta_1 \sim \calA_{priv}'(D)}{\calL_1(\theta_1)} - \calL_1(\theta^*(\tau_1))}.\]

By \cref{lem:privatesclb}, the final expression is $\Omega(1)$ for some distribution $\tau_1(\calA_{priv})$. In turn, for the corresponding $\tau(\tau_1(\calA_{priv}))$, $\calA_{priv}$ has excess population loss $\Omega(1)$ in expectation as desired. 

\mypar{Proof of (1) for $\calA_{pub}$}  This follows by an argument symmetric to the previous part, except we use \cref{lem:publicsclb} instead of \cref{lem:privatesclb}, and the observation that minimizing $\frac{p}{2r^2} \ltwo{\theta_2 - d_2}^2$ is equivalent to minimizing $\frac{p}{2} \ltwo{\theta_2 - d_2}^2$ over $B_p(0, 1)$. In particular, the lower bound on just $\ltwo{\theta_2 - d_2}^2$ given by \cref{lem:privatesclb} is $\Omega(1/p)$, and the lower bound of $\Omega(1)$ on $\calL$ follows after using the same reduction as in the proof of (1) and taking into account the multiplier $\frac{p}{2}$.

\mypar{Proof of (2)} 
This follows similarly to Theorem~\ref{thm:quadratic-unconstrained}, so we only highlight the high-level proof and major changes here. A single step of projected gradient descent on the public data gets us to the empirical minimizer of $\theta_1$, which achieves excess risk $O(1/p)$ on $\frac{1}{2}\ltwo{\theta_1 - d_1}^2$. Then, since we are using projected gradient descent, we know $\theta_2$ is distance $O(r)$ from the population minimizer of $\theta_2$, so projected DP-SGD on the private data gets to a point which achieves risk $O(1/p)$ on $\frac{p}{2r^2}\ltwo{\theta_2 - d_2^2}$. By a similar argument to  Theorem~\ref{thm:quadratic-unconstrained}, projected DP-SGD does not cause $\theta_1$ to move by more than $O(1/p)$ with high probability if $r = O(\frac{1}{p^{5/2} \sqrt{\log(1/\delta)}})$. 
\end{proof}

If we want to take this same example and make it unconstrained, an issue arises: A single step of gradient step with step size $1$ will cause $\theta_2$ to move by $1/r^2$, which is far larger than the radius of the ball that $\theta_2$ was restricted to in the constrained setting. In turn, the DP-SGD guarantees worsened. We can remedy this by taking smaller step sizes on the public data so that each step is non-expansive, i.e. $\theta_2$ does not leave the ball and the DP-SGD guarantees still hold. However, in order to do so we need to use step sizes where $\eta = O(r^2)$, which means we will need to take $\Omega(1/r^2)$ steps in order to reduce our distance to the minimizing $\theta_1$ by a constant. Since $r$ is being set to a small value, this is a large number of steps. In other words, it is possible to take this example and make it unconstrained, while still satisfying that public-then-private gradient descent achieves the desired excess loss, but the algorithm will not be efficient.
\end{document}